\setlist[enumerate]{leftmargin=.5in}
\setlist[enumerate,1]{label=\textit{\alph*)}}
\setlist[itemize]{leftmargin=.5in}
\newlist{todolist}{itemize}{2}
\setlist[todolist]{label=$\square$}
\crefname{hypothesis}{Hypothesis}{Hypotheses}
\title{PathProx: A Proximal Gradient Algorithm for \\ Weight Decay Regularized Deep Neural Networks
\thanks{\vspace{-1em}\funding{This project is supported by the Institute for Foundations of Data Science (IFDS).} Code is available at: \href{https://github.com/Leiay/PathProx}{\texttt{https://github.com/Leiay/PathProx}}.
}}
\author{Liu Yang\thanks{University of Wisconsin, Madison, USA (\email{liu.yang@wisc.edu}, \email{jifan@cs.wisc.edu}, \email{jshenouda@wisc.edu}, \email{dimitris@papail.io}, \email{kangwook.lee@wisc.edu}, \email{rdnowak@wisc.edu}).} \and Jifan Zhang\footnotemark[2] \and Joseph Shenouda\footnotemark[2] \and Dimitris Papailiopoulos\footnotemark[2] \and Kangwook Lee\footnotemark[2] \and Robert D. Nowak\footnotemark[2]
}
\newcommand{\oast}{\mathbin{\mathpalette\make@circled\ast}}
\newcommand{\make@circled}[2]{%
  \ooalign{$\m@th#1\smallbigcirc{#1}$\cr\hidewidth$\m@th#1#2$\hidewidth\cr}%
}
\newcommand{\smallbigcirc}[1]{%
  \vcenter{\hbox{\scalebox{0.77778}{$\m@th#1\bigcirc$}}}%
}
\newcommand{\vectorize}[1]{\text{vec}(#1)}
\newcommand{\mc}[1]{\mathcal{#1}}
\newcommand{\bv}{\boldsymbol{v}}
\newcommand{\bw}{\boldsymbol{w}}
\newcommand{\bx}{\boldsymbol{x}}
\newcommand{\by}{\boldsymbol{y}}
\newcommand{\bz}{\boldsymbol{z}}
\newcommand{\bW}{\boldsymbol{W}}
\newcommand{\bX}{\boldsymbol{X}}
\newcommand{\bY}{\boldsymbol{Y}}
\newcommand{\bZ}{\boldsymbol{Z}}
\newcommand{\R}{\mathbb{R}}
\def\floor#1{\lfloor #1 \rfloor}
\def\1{\bm{1}}
\DeclareMathOperator*{\argmin}{arg\,min}
\newcommand{\abnorm}{\textsc{$\ell_2$-Path-Norm }}
\newcommand{\pathprox}{\textsc{PathProx}}
\newcommand\tsup[2][2]{%
 \def\useanchorwidth{T}%
  \ifnum#1>1%
    \stackon[-1.3ex]{\tsup[\numexpr#1-1\relax]{#2}}{\mathchar"307E}%
  \else%
    \stackon[-1ex]{#2}{\mathchar"307E}%
  \fi%
}
\begin{document}

\maketitle

\begin{abstract}
Weight decay is one of the most widely used forms of regularization in deep learning, and has been shown to improve generalization and robustness. The optimization objective driving weight decay is a sum of losses plus a term proportional to the sum of squared weights.  This paper argues that stochastic gradient descent (SGD) may be an inefficient algorithm for this objective. For neural networks with ReLU activations, solutions to the weight decay objective are equivalent to those of a different objective in which the regularization term is instead a sum of products of $\ell_2$ (not squared) norms of the input and output weights associated with each ReLU neuron. This alternative {(and effectively equivalent)} regularization suggests a novel proximal gradient algorithm for network training.  Theory and experiments support the new training approach, showing that it can converge much faster to the \emph{sparse} solutions it shares with standard weight decay training.
\end{abstract}

\begin{keywords}
Deep Neural Networks, Weight Decay, Regularization, Proximal Method, Sparsity 
\end{keywords}

\begin{MSCcodes}
68T05, 
68T20, 
90C26, 
47A52, 
82C32  
\end{MSCcodes}

\section{Introduction}
Weight decay is the most prevalent form of explicit regularization in deep learning, which corresponds to regularizing the sum of squared weights in the model. It has been shown to improve the generalization performance of deep neural networks \cite{krogh1991simple, bartlett1996valid, Zhang2017UnderstandingDL} and even plays a role in making models more robust \cite{galloway2018adversarial, Guo2018SparseDW, pang2021bag}. This paper shows that weight decay regularization can be equivalently and more effectively incorporated into training via shrinkage and thresholding. 
To gain some intuition into the connection between weight decay and thresholding, let us consider a key aspect of most neural networks. Deep {neural network} architectures include many types of processing steps, but the basic neuron or unit is common in {all}.
Consider a single unit of the form $\bv \sigma(\bw^T\bx)$, where $\sigma$ is a fixed activation function and $\bv,\bw$ denote its trainable output and input weights.
This single unit is homogeneous if $\bv \sigma(\bw^T\bx) = \alpha\bv \sigma(\alpha^{-1}(\bw^T\bx))$ for all constants $\alpha>0$. 
Weight decay regularization of this unit corresponds to adding a term proportional to $\frac{1}{2}\big(\|\bw\|_2^2 + \|\bv\|_2^2)$ to the optimization {objective, where we omit the bias term for notational ease.}
Among all the equivalent representations of the unit, it is easy to verify that $\alpha^2 = \|\bw\|_2/\|\bv\|_2$ produces the smallest regularization term by the {inequality of arithmetic and geometric means.}
Thus, at a minimum of the objective we have $\frac{1}{2}\big(\|\bw\|_2^2 + \|\bv\|_2^2) = \|\bw\|_2\|\bv\|_2$.  This simple fact is known \cite{grandvalet1998least, neyshabur2014search}, albeit perhaps not widely.  
{This indicates that the global solutions achieved with either regularizer are equivalent, considering potential rescaling if required.}
The form $\|\bw\|_2\|\bv\|_2$ 
is reminiscent of $\ell_1$-type regularization functions, such as the lasso and group lasso regularizers. {As a result, utilizing the $\|\bw\|_2\|\bv\|_2$ norm as a regularizer will promote sparsity in the resulting solution as an additional outcome.}

In this paper, we propose to replace the weight decay regularization terms with terms of the form $\|\bw\|_2\|\bv\|_2$.  The latter regularizer admits a proximal operation {(coined  \pathprox)} that involves a shrinkage and thresholding step for each homogeneous unit. Theory (\cref{sec:theory}) and experiments (\cref{sec:exp}) show this {leads to faster minimization of the weight decay objective, resulting in solutions with smaller Lipschitz constants and thus increased robustness compared to traditional weight decay training. Furthermore, the utilization of the thresholding operation within \pathprox\ promotes the identification of weight decay solutions that are more sparse. Initial result is shown in \cref{fig:w2v2_vs_wd} and \cref{fig:w2v2_vs_wd_decision_boundary}.}

\begin{figure}[ht]
    \centering
    \includegraphics[width=0.4\textwidth]{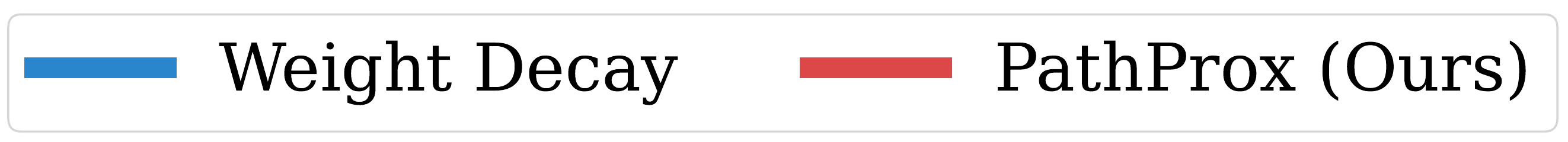}
    \hspace{.39\textwidth}
    \includegraphics[width=0.33\textwidth]{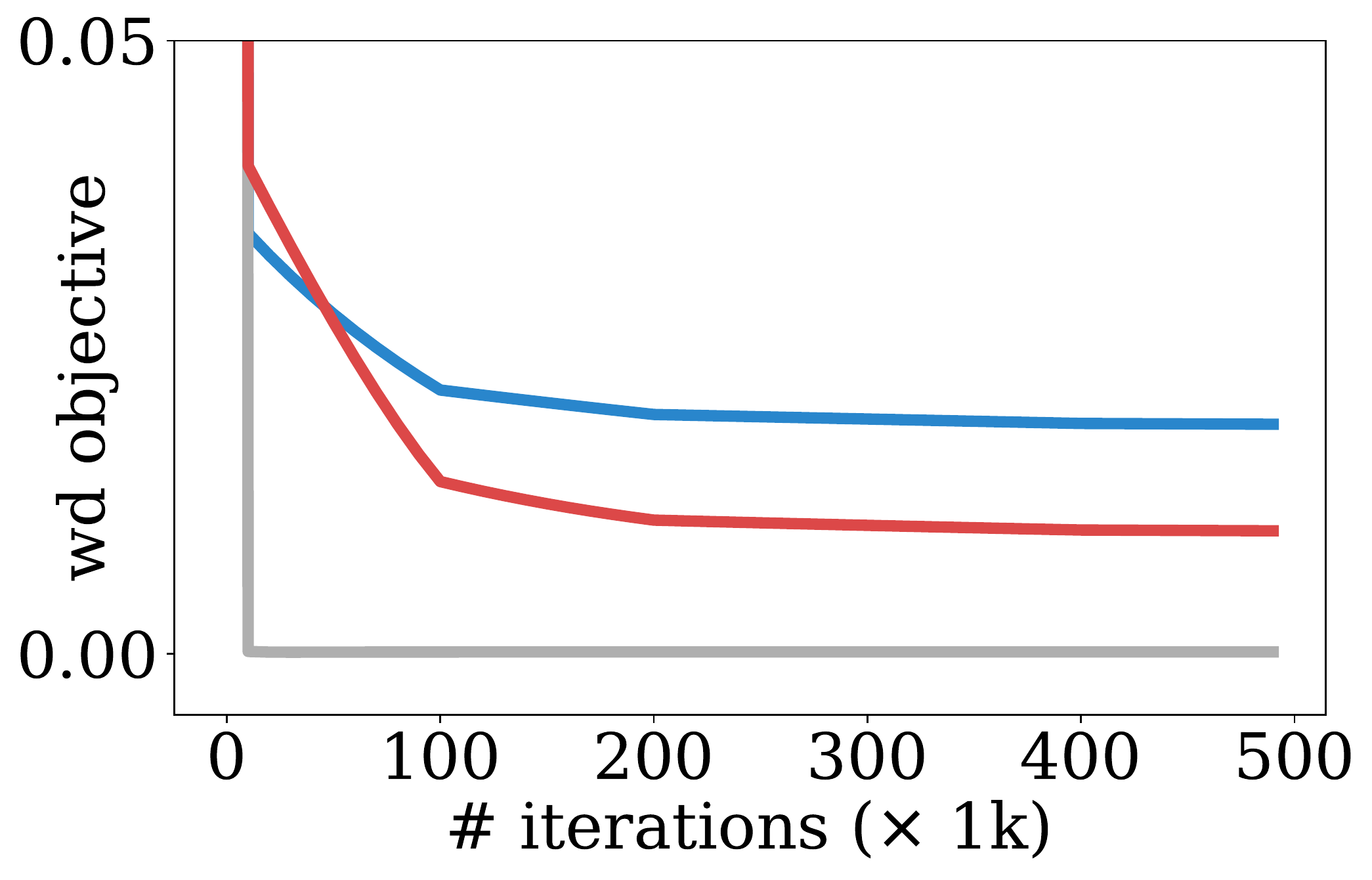}
    \includegraphics[width=0.33\textwidth]{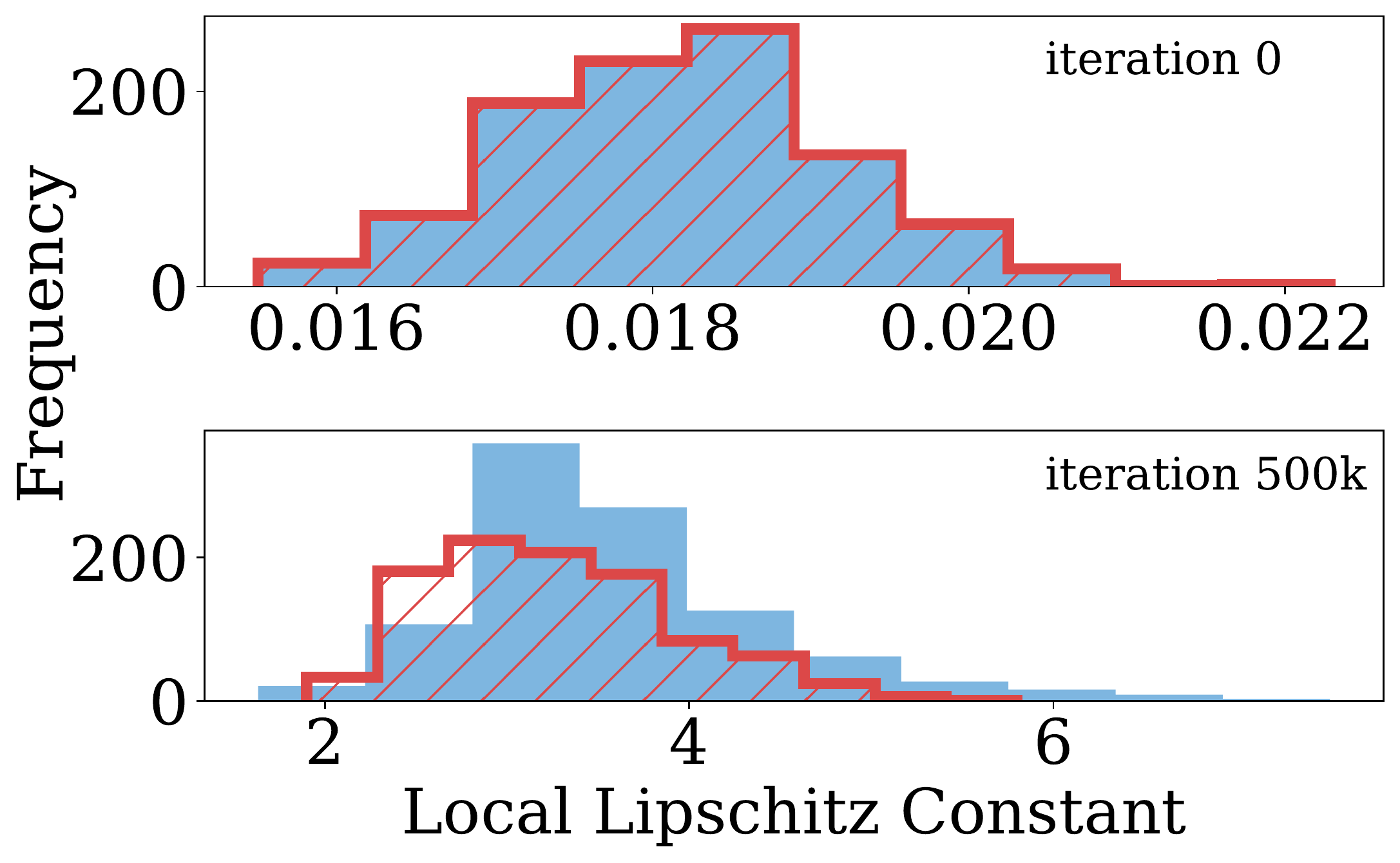}
        \includegraphics[width=0.322\textwidth]{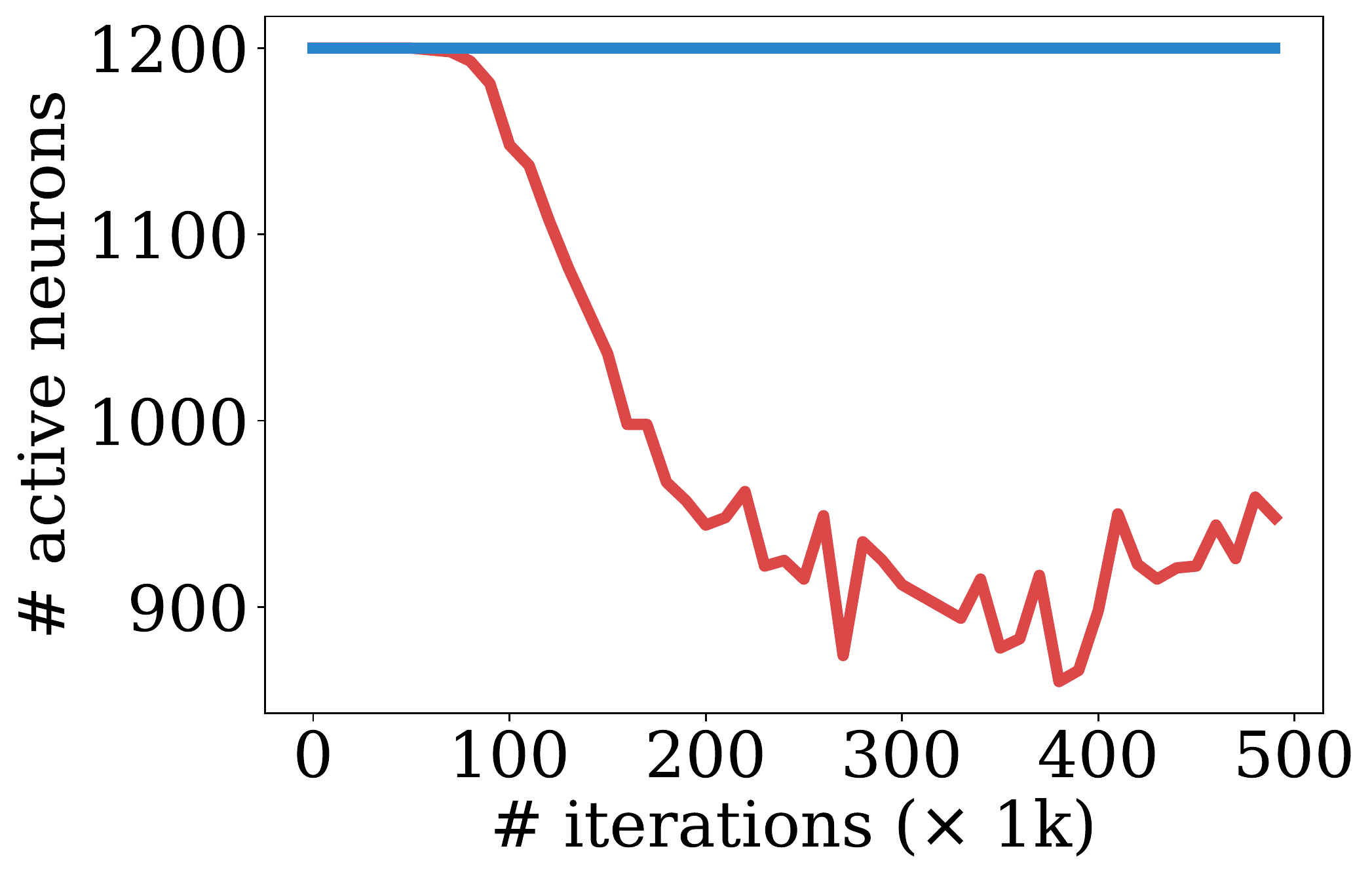}
    \caption{ Comparison between Weight Decay (blue) and 
    \pathprox\ (red) as a function of training {iterations}.
    On (\emph{left}) weight decay objective, with the gray solid line representing the data fidelity loss (both methods fit the data perfectly); (\emph{middle}) {the histogram of local Lipschitz constant on unseen data at the beginning (top) and completion (bottom) of the training process},
    and (\emph{right}) number of active neurons. Please refer to~\cref{appendix:exp_details_for_fig} for detailed experiments setup. 
    }
    \label{fig:w2v2_vs_wd}
\end{figure}

\begin{figure}[ht]
\centering
     \includegraphics[width=0.25\textwidth]{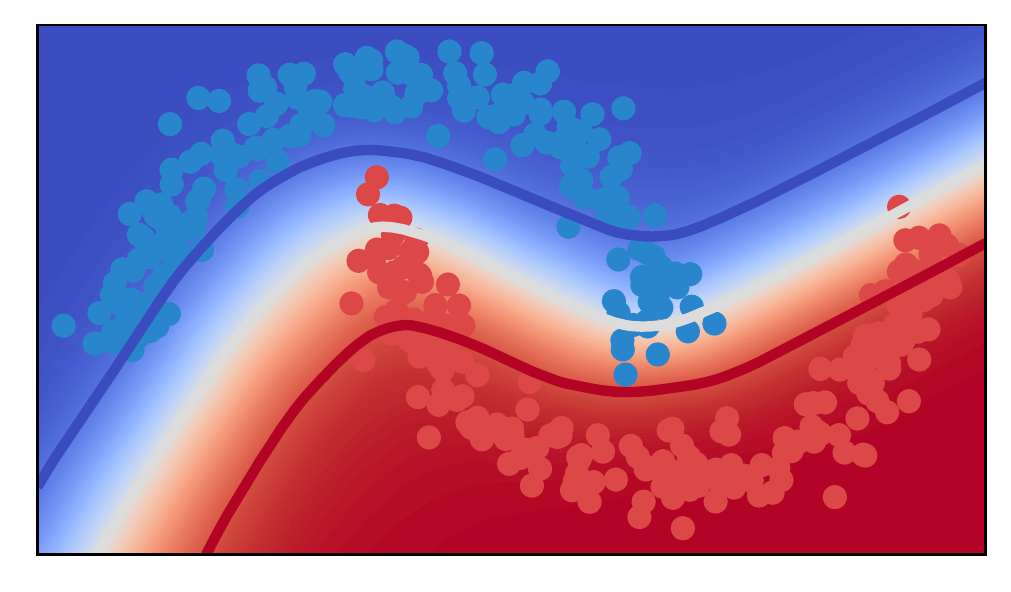}
     \includegraphics[width=0.25\textwidth]{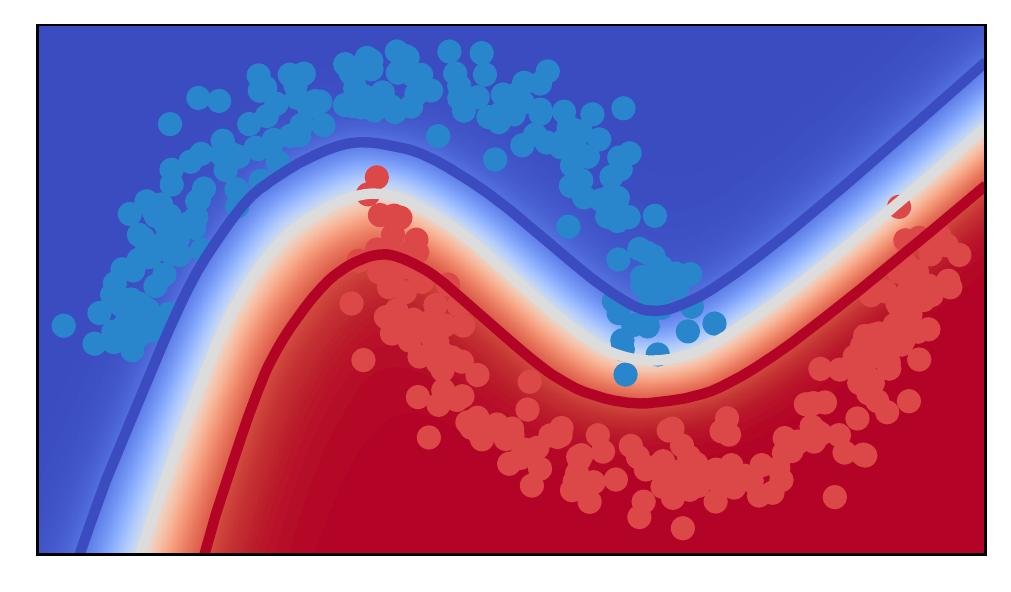}
     \includegraphics[width=0.25\textwidth]{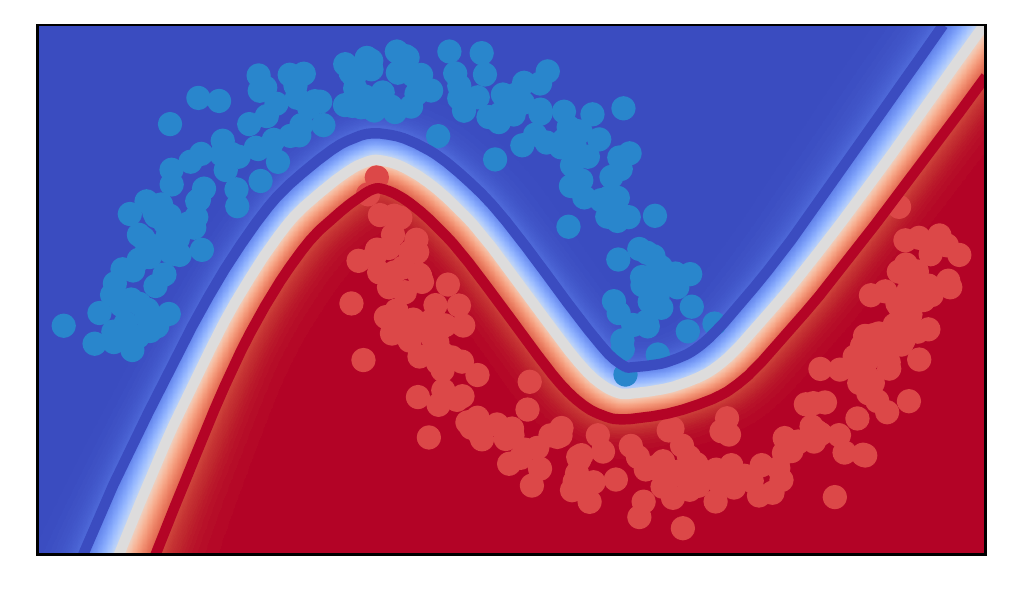}
        \caption{ Decision Boundary of a network trained for binary classifier with Weight Decay (\emph{left}) and 
        \pathprox\ 
        (\emph{middle}) after the same number of training {iterations}. 
        {
        The classifier outputs $[p_0, p_1]$ with $p_0, p_1 \geq 0$ and $p_0 + p_1 = 1$, and predicts class blue if $p_0 \geq p_1$, red otherwise.
        In the figure, the decision boundary is depicted by a white line, indicating the point where the classifier output $p_0 = p_1 = 0.5$. The blue line corresponds to regions where the classifier predicts $p_0 = 0.9$, while the red line corresponds to $p_0 = 0.1$.
        }
        {While both algorithms eventually converge to the same function (\emph{right}), our method learns the boundary \emph{much} faster.}
        Please refer to \cref{appendix:exp_details_for_fig} for details on {the definition of decision boundary} and also the experiment setup.
        }
        \label{fig:w2v2_vs_wd_decision_boundary}
        
\end{figure}

\section{Related Works}

Lines of work have focused on theoretically analyzing equivalent forms of weight decay: \cite{grandvalet1998least} was the first work to highlight an equivalence between weight decay and lasso on the output weights of a neural network. Later \cite{neyshabur2014search,Neyshabur2015NormBasedCC} proved that for a shallow, \emph{single output} network, training the network with squared $\ell_2$ regularization on all the weights (i.e. weight decay) is equivalent to minimizing $\| \bw \|_2  \, |v|$ for each  homogeneous unit. 
This connection was used in \cite{savarese2019infinite,ongie2019function,parhi2022kinds, parhi2021banach, parhi2020role} to study the function space associated with neural networks trained with weight decay. 
For \emph{multi-output} networks training with weight decay is equivalent to minimizing $\|\bw\|_{2}\|\bv\|_{2}$ for each homogeneous unit, \cite{shenouda2023vector} used this connection to characterize the space of functions associated with vector-valued and deep neural networks trained with weight decay.
{More recently, a line of work~\cite{Pilanci2020NeuralNA,Wang2022TheHC,Mishkin2022FastCO,ergen2021convex} has shown how optimizing shallow ReLU neural network can be reduced to a convex program with a group lasso regularizer to the output weights. Here, we also utilize the connection between weight decay and group lasso, but propose a proximal gradient method that instead directly operates on the non-convex objective.}
 Our work is the first to practically exploit these equivalent forms of weight decay by employing proximal methods.

A number of works have theoretically shown that sparse solutions to the weight decay objective exist~\cite{Jacot2022FeatureLI, Jacot2022ImplicitBO, shenouda2023vector, Boursier2023PenalisingTB}.
However, the traditional approach of training deep neural networks with gradient descent and weight decay seldomly produces such sparse solution.
In contrast, our proposed algorithm is based on a sparsity-encouraging proximal gradient algorithm that tends to find sparse solutions to the original weight decay objective. 
This is analogous to the superiority of proximal gradient over pure (sub)gradient methods for $\ell_1$ regularization in linear models (e.g., the lasso problem) \cite{figueiredo2003algorithm,Wright2008SparseRB}. 
Other recent works have proposed related forms of regularization and argued that they find better structured solutions. For instance in \cite{neyshabur2015path, liu2020improve} they utilize the homogeneity of ReLU neural networks to develop regularizers that empirically lead to solutions that generalize better and are more robust \cite{Jiang2020FantasticGM, Dziugaite2020InSO}. 
These alternative regularizers are rarely used in practice, instead we provide a more efficient algorithm for finding solutions to the widely used weight decay objective.

Training regularized neural networks via proximal methods has been employed for the purposes of training quantized \cite{bai2018proxquant,hou2016loss, huang2021training,Yang2020ProxSGD:} or sparse neural networks \cite{fu2022exploring, yoon2017combined, bungert2021bregman,Chen2020OrthantBP}. Specifically, in \cite{fu2022exploring, yoon2017combined} proximal methods were shown to be effective at learning \textit{structurally sparse} networks. In \cite{Latorre2020EfficientPM} they employ a proximal gradient-type algorithm for 1-path-norm  where they focus on the $\|\bw\|_1 \|\bv\|_1$ norm of a homogeneous unit $(\bw, \bv)$ in shallow networks. More general algorithms have been proposed in \cite{Yang2020ProxSGD:} for training neural networks with any non-smooth regularizers and \cite{yun2020general} proposed a framework for second-order stochastic proximal methods on neural networks with non-smooth or non-convex regularizers. 
Based on the rescaling equivalence on homogeneous units, \cite{stock2018equinormalization} also proposed an algorithm to iteratively minimize the weight decay objective though they take inspiration from the Sinkhorn-Knopp algorithm and do not employ proximal methods.
From a more theoretical point of view~\cite{davis2020stochastic} proved that proximal sub-gradient methods are guaranteed to converge to first-order stationary points when used in training deep neural networks. Furthermore, their analysis can be applied directly to our algorithm to guarantee converge.

\section{Training Neural Networks with Weight Decay}\label{sec:theory}

Let $\bW$ denote the weights of a multi-layer neural network. 
Most standard ``training" algorithms fit neural networks to data by minimizing an objective of the form 
\begin{equation}\label{eq:weight_decay_objective}
  F_\lambda(\bW) \ := \ L(\bW) + \frac{\lambda}{2}\,  R(\bW)  
\end{equation}
where $L(\bW)$ is a loss function on the training data, $R(\bW)$ is the sum of squared weights, and $\lambda \geq 0$. When minimized by gradient descent methods, $R(\bW)$ leads to the common practice known as ``weight decay", {which corresponds to a shrinkage operation on the weights after taking the gradient step: $\bW \leftarrow (1 - \lambda \eta)\bW$, where $\eta$ is the stepsize.}  We will call $F_\lambda(\bW)$ the \emph{weight decay objective} and $\lambda$ the \emph{weight decay parameter}.

The neural network may have a general architecture (fully connected, convolutional, etc) and may involve many types of units and operations (e.g., nonlinear activation functions, pooling/subsampling, etc).  This paper focuses on those units in the architecture that are \emph{homogeneous}.
\begin{definition}[Homogeneous Function]
  A function $\sigma$ is homogeneous if it satisfies $\sigma(\alpha x) = \alpha \sigma(x)$ for any $\alpha>0$.    
\end{definition}
For example, the popular Rectified Linear Unit (ReLU), Leaky ReLU and PReLU are homogeneous.  Consider a neuron with input weights $\bw\in\R^p$ and output weights $\bv\in\R^q$.  The neuron produces the mapping $\bx \mapsto \bv \, \sigma(\bw^T\bx)$.  Because $\sigma$ is homogeneous, $\bv \, \sigma(\bw^T\bx) = \alpha \bv \, \sigma(\alpha^{-1}\bw^T\bx)$, for every $\alpha>0$.  
The following {\bf Neural Balance Theorem} provides an important characterization of representations with the minimum sum of squared weights.
\begin{theorem}\label{thm:neuron_balance} (Neural Balance Theorem)
  Let $f$ be a function represented by a neural network and consider a representation of $f$ with the minimum sum of squared weights.   Then the weights satisfy the following \emph{balancing constraints}. Let $\bw$ and $\bv$ denote the input and output weights of any homogeneous unit in this representation. Then $\|\bw\|_2 = \|\bv\|_2.$
 \label{thm:balance}
\end{theorem}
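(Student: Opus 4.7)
My plan is to exploit the rescaling invariance of each homogeneous unit and reduce the claim to a one‑dimensional optimization in a single scalar $\alpha>0$. Let $\bW^\star$ be a representation of $f$ that attains the minimum sum of squared weights, and fix an arbitrary homogeneous unit within $\bW^\star$ with input weights $\bw$ and output weights $\bv$. Because $\sigma$ is homogeneous, the substitution $(\bw,\bv)\mapsto(\alpha^{-1}\bw,\,\alpha\bv)$ preserves the map $\bx\mapsto \bv\,\sigma(\bw^T\bx)$ for every $\alpha>0$, and in particular leaves the overall network function $f$ unchanged while every other weight of $\bW^\star$ is held fixed.

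I would then write the total sum of squared weights along this one‑parameter family as
\[
 g(\alpha)\;=\;C\;+\;\alpha^{-2}\|\bw\|_2^2\;+\;\alpha^2\|\bv\|_2^2,
\]
where $C$ collects the contributions of all untouched weights. Since $\bW^\star$ is a global minimizer and the rescaled network still represents $f$, the scalar $\alpha=1$ must itself minimize $g$ over $\alpha>0$. The first‑order condition $g'(1)=-2\|\bw\|_2^2+2\|\bv\|_2^2=0$ immediately yields $\|\bw\|_2=\|\bv\|_2$, and since the chosen homogeneous unit was arbitrary, the same identity holds at every such unit of $\bW^\star$.

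There is essentially no hard step here: the argument is a first‑order optimality condition in one variable, and convexity of $g$ in $\alpha$ even upgrades the necessary condition to a sufficient one. The only point requiring mild care is confirming that the local rescaling of a single unit really does leave the \emph{global} function $f$ invariant, which is exactly where positive homogeneity of $\sigma$ enters and explains why the theorem is specialized to homogeneous units (it would fail for, e.g., sigmoidal activations). As a sanity check, minimizing $g$ explicitly gives $\alpha^2=\|\bw\|_2/\|\bv\|_2$ and hence $\tfrac12(\|\bw\|_2^2+\|\bv\|_2^2)=\|\bw\|_2\|\bv\|_2$ at the optimum, consistent with the AM–GM remark in the introduction, though this stronger identity is not required for the stated balancing conclusion.
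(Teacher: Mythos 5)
Your proof is correct and follows essentially the same route as the paper: both exploit the rescaling invariance $(\bw,\bv)\mapsto(\alpha^{-1}\bw,\alpha\bv)$ of a homogeneous unit and the optimality of the given representation over this one-parameter family, with the paper arguing by contradiction via the explicit minimizer $\alpha=\sqrt{\|\bv\|_2/\|\bw\|_2}$ while you phrase the same fact as the first-order condition $g'(1)=0$. No gaps; the two presentations are equivalent.
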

\begin{proof}
  Assume there exists a representation $f$ with minimum sum of squared weights, but does not satisfy the constraint for a certain unit. Because the unit is homogeneous, its input and output weights, $\bw$ and $\bv$,  can be scaled by $\alpha>0$ and $1/\alpha$, respectively, without changing the function. The solution to the optimization $\min_{\alpha>0} \|\alpha\, \bw\|_2^2 + \|\alpha^{-1} \bv\|_2^2$ 
 is $\alpha = \sqrt{\| \bv\|_2/\|\bw\|_2}$.
 Thus, we can rescale the input and output weights to meet the constraint while preserving $f$ yet reducing the sum of squared weights, contradicting the beginning assumption in the proof.
\end{proof}

\begin{remark}
Versions of Neural Balance Theorem (NBT) and its consequences have been discussed in the literature \cite{grandvalet1998least,neyshabur2015path,Neyshabur2015NormBasedCC,savarese2019infinite,ongie2019function,parhi2021banach,parhi2022kinds,Jacot2022ImplicitBO}, but usually in the setting of fully connected ReLU architectures. In {\cite{kunin2021neural} they empirically demonstrate the balance theorem and provide theoretical understanding from the perspective of learning dynamics.}
We note here that NBT holds for any architecture (fully connected, convolutional, pooling layers, etc.) and every homogeneous unit in the architecture.
{Examples of homogeneous units in  fully connected layer and convolutional layer are depicted in~\cref{fig:one_homogeneous}. The details of homogeneous units in different architectures are presented in~\cref{appendix:homo_units_everywhere}.}
\end{remark}
\begin{figure}[h!]
    \centering
    \includegraphics[height=1.1in]{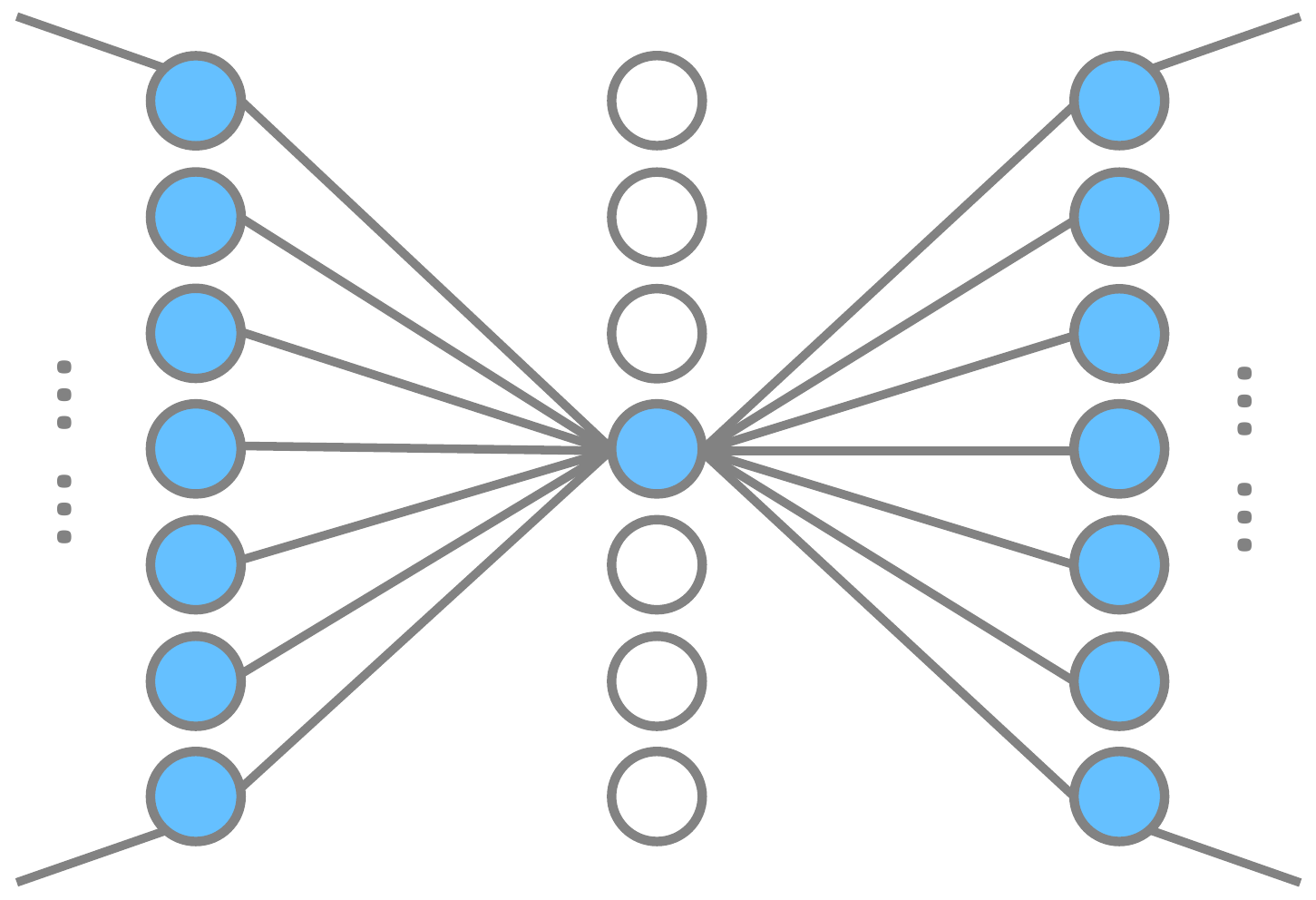}
    \hspace{0.3em}
    \includegraphics[height=1.1in]{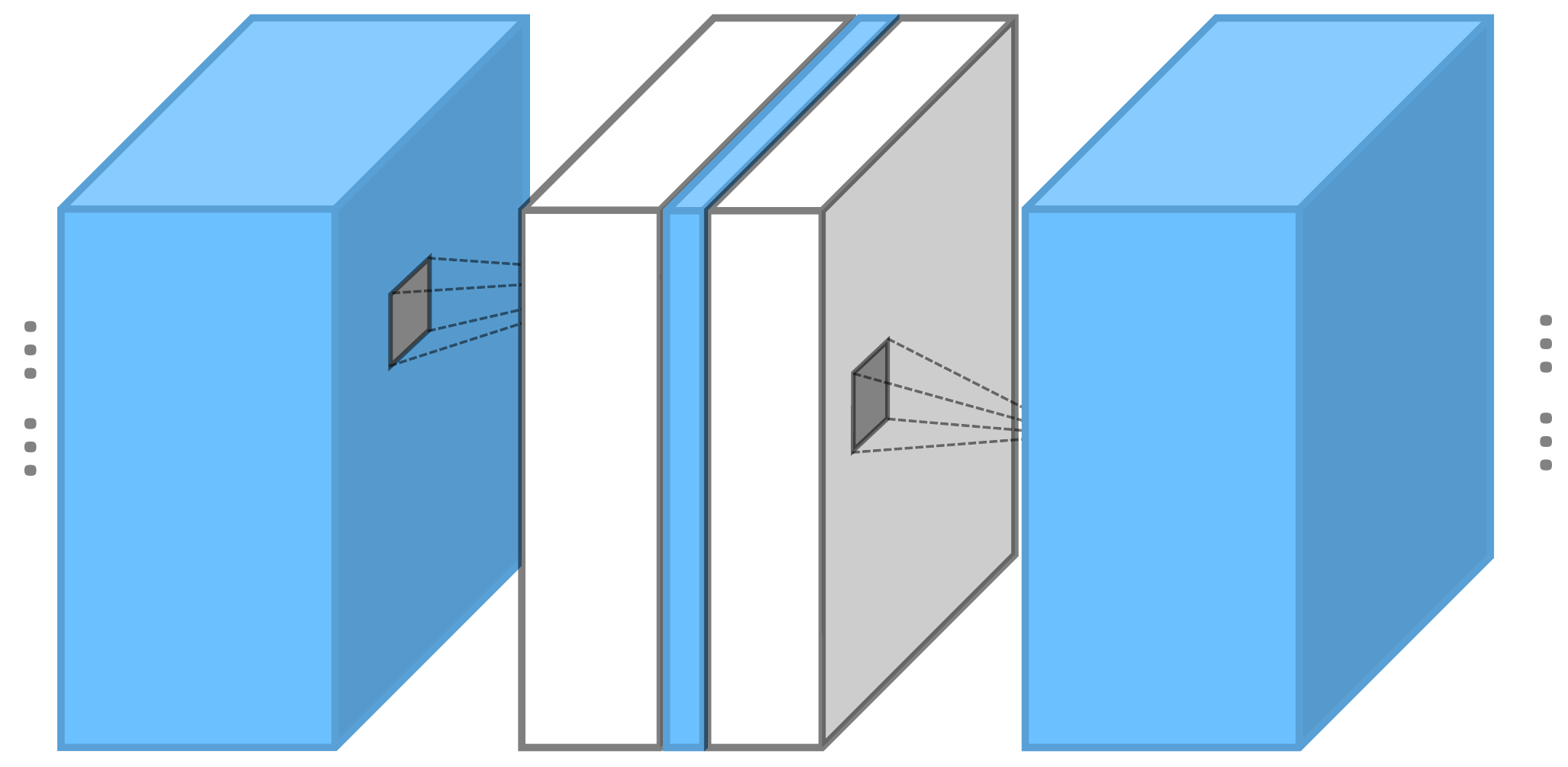}
    \caption{Examples of one-homogeneous unit  in multi-layer perceptron (MLP) (\emph{left}), and convolutional neural network (CNN) (\emph{right}). For MLP, one-homogeneous unit is each neuron; for CNN, one-homogeneous unit is each channel.}
    \label{fig:one_homogeneous}
\end{figure}

\Cref{thm:balance} tells us that the norms of the input and output weights of each homogeneous unit must be equal to each other at a minimum of $F_\lambda$. 
To illustrate a key implication of this, consider a neural network with  $L$ layers of homogeneous {units}.  Let $\bw_{i,k}$ and $\bv_{i,k}$ denote the input and output weights of the $i$th {homogeneous unit} in the $k$th layer.  We use $\bW$ to denote the collection of all weights in all layers. \Cref{thm:balance} shows that a global minimizer of the weight decay objective $F_\lambda$ with any $\lambda>0$ must satisfy the balancing constraints of the theorem.  At a global minimum the $i$th {unit} in the $k$th layer contributes the term $\frac{1}{2}\big(\|\bw_{i,k}\|_2^2+ \|\bv_{i,k}\|_2^2\big) =  \|\bw_{i,k}\|_2  \|\bv_{i,k}\|_2$ to the overall sum of squares $R(\bW)$.  Therefore, if $\bW$ is a global minimizer of $F_\lambda$, then
\begin{equation}
R(\bW) \ = \ \frac{1}{2}\sum^{n_1}_{i=1} \|\bw_{i,1}\|_2^2+ \frac{1}{2}\sum^{n_L}_{i=1} \|\bv_{i,L}\|_2^2+ \sum_{k=1}^L \sum^{n_{k}}_{i=1} \|\bw_{i,k}\|_2 \,  \|\bv_{i,k}\|_2
\label{eq:global_min}    
\end{equation}
$n_{k}$ is the number of {homogeneous units} in the $k$th layer. 
The expression above accounts for the fact that input weights of a {unit} in layer $j$ involve the output weights of {unit} in the preceding layer $j-1$. So the weights of internal layers appear twice in the expression above, whereas weights in the input and output layers do not. 
This shows that increasing the weight decay parameter $\lambda$ penalizes the average of the norm-products $\|\bw\|_2  \|\bv\|_2$.  This observation provides {remarkable} insight into the effects of weight decay regularization. 
Let $\eta(\bx):=\bv\, \sigma(\bw^T\bx)$ be a neuron with input weights $\bw\in \R^p$ and output weights $\bv\in \R^q$, and assume that $\sigma$ is $1$-Lipschitz, {(e.g. ReLU)}. Then, by the Cauchy-Schwartz inequality, for all $\bx,\bx'\in\R^m$,
$$\|\eta(\bx)-\eta(\bx') \|_2 \ \leq \ \|\bw\|_2 \, \|\bv\|_2 \, \|\bx-\bx'\|_2 \ , $$
which shows $\|\bw\|_2\|\bv\|_2 $ is a bound on the Lipschitz constant of $\eta$.  Thus, weight decay regularization encourages solutions in which the individual unit functions have small Lipschitz constants on average, a property known to be related to generalization and robustness \cite{bubeck2021law}.

\section{A New Algorithm for Minimizing the Weight Decay Objective}\label{sec:algo}
In this section, we propose a new neural network training algorithm. It exploits the fact that a global minimizer to the weight decay objective is related to a sum of norm-products, as shown in~\cref{eq:global_min}.  The following expression for the sum of squared weights will be helpful in deriving the new algorithm.
Again consider a neural network with homogeneous {units} and $L$ layers,
{and let $\bw_{i,k}$ and $\bv_{i,k}$ be the input and output weights of the $i$th unit in the $k$th layer, with $n_k$ be the total number of homogeneous units in $k$th layer.}
Since the inputs weights of a {unit} in layer $k$ involve the output weights of {units} in the preceding layer $k-1$, we will {restate the weight decay objective which} associates the weights with odd number layers:
\begin{equation}\label{eqn:wd_obj}
    R(\bW) \ = \sum_{j=1}^{\lfloor L/2\rfloor}\sum_{i=1}^{n_{2j-1}} \big(\|\bw_{i,2j-1}\|_2^2+ \|\bv_{i,2j-1}\|_2^2\big) \ + \ c\sum_{i=1}^{n_L} \|\bv_{i,L}\|_2^2
\end{equation}
here $c=0$ if $L$ is even and $c=1$ if $L$ is odd. Each weight appears only once in the expression above, which is convenient for optimization. 
However, \cref{thm:balance} implies that for any $\mathbf{W}$ that minimizes $F_\lambda$, 
{we have that $(\|\bw_{i,2j-1}\|_2^2 + \|\bv_{i,2j-1}\|_2^2) = 2\|\bw_{i,2j-1}\|_2\, \|\bv_{i,2j-1}\|_2$ for each {unit}. This fact leads to the following theorem.} 

\begin{theorem}
For any weights $\bW$ let 
\begin{equation}\label{eqn:pn_obj}
    \widetilde R(\bW) := \sum_{j=1}^{\lfloor L/2\rfloor}\sum_i^{n_{2j-1}} \|\bw_{i,2j-1}\|_2\, \|\bv_{i,2j-1}\|_2 \ + \ \frac{c}{2}\sum_i^{n_L} \|\bv_{i,L}\|_2^2.
\end{equation}
Then the solutions to 
$\min_{\bW} L(\bW)+\frac{\lambda}{2} R(\bW)$ and $\min_{\bW} L(\bW)+\lambda \widetilde R(\bW)$ are equivalent.  Specifically, a minimizer of the first optimization is a minimizer of the second, and a minimizer of the second minimizes the first (after possibly rescaling the weights such that $\|\bw_{i,2j-1}\|_2  = \|\bv_{i,2j-1}\|_2$ for {all homogeneous units}).
\label{thm:path-norm}
\end{theorem}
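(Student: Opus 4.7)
The plan is to show that both objectives $F_\lambda(\bW) := L(\bW) + \tfrac{\lambda}{2} R(\bW)$ and $\widetilde F_\lambda(\bW) := L(\bW) + \lambda\, \widetilde R(\bW)$ attain the same minimum value, and to characterize their minimizers up to the rescaling freedom of homogeneous units. The argument rests on three elementary observations. First, termwise AM--GM gives
$$\tfrac{1}{2}\bigl(\|\bw_{i,2j-1}\|_2^2 + \|\bv_{i,2j-1}\|_2^2\bigr) \ \geq\ \|\bw_{i,2j-1}\|_2 \,\|\bv_{i,2j-1}\|_2,$$
with equality if and only if $\|\bw_{i,2j-1}\|_2 = \|\bv_{i,2j-1}\|_2$; summing, and noting that the unpaired last-layer weights $\bv_{i,L}$ (present only when $L$ is odd) contribute $\tfrac{\lambda c}{2}\|\bv_{i,L}\|_2^2$ identically to both $F_\lambda$ and $\widetilde F_\lambda$, yields $F_\lambda(\bW)\geq \widetilde F_\lambda(\bW)$ for every $\bW$. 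Second, rescaling a single homogeneous unit by $(\bw_{i,k}, \bv_{i,k}) \mapsto (\alpha \bw_{i,k}, \alpha^{-1} \bv_{i,k})$ leaves the network function, and hence $L(\bW)$, unchanged, and it also leaves $\widetilde R(\bW)$ unchanged since the product $\|\bw\|_2\|\bv\|_2$ is scale-invariant. Third, \Cref{thm:balance} guarantees that any minimizer of $F_\lambda$ is already balanced, so the AM--GM inequality holds there with equality.

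From these three facts the equivalence follows in both directions. Given any $\bW$, rescale each homogeneous unit independently (with $\alpha_{i,k} = \sqrt{\|\bv_{i,k}\|_2/\|\bw_{i,k}\|_2}$) to obtain a balanced $\bW'$ with $L(\bW') = L(\bW)$, $\widetilde R(\bW') = \widetilde R(\bW)$, and $\tfrac{1}{2} R(\bW') = \widetilde R(\bW')$; this gives $F_\lambda(\bW') = \widetilde F_\lambda(\bW)$, so $\min F_\lambda \leq \min \widetilde F_\lambda$, while the pointwise inequality $F_\lambda \geq \widetilde F_\lambda$ gives $\min F_\lambda \geq \min \widetilde F_\lambda$. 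Hence the optimal values agree. A minimizer $\bW^\star$ of $F_\lambda$ is balanced by \Cref{thm:balance}, so $\widetilde F_\lambda(\bW^\star) = F_\lambda(\bW^\star) = \min F_\lambda = \min \widetilde F_\lambda$, i.e.\ $\bW^\star$ also minimizes $\widetilde F_\lambda$. Conversely, a minimizer $\widetilde{\bW}$ of $\widetilde F_\lambda$ may fail to be balanced, but its rescaled image $\widetilde{\bW}'$ satisfies $F_\lambda(\widetilde{\bW}') = \widetilde F_\lambda(\widetilde{\bW}') = \widetilde F_\lambda(\widetilde{\bW}) = \min F_\lambda$, recovering the ``after possibly rescaling'' clause in the statement.

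The main subtlety is bookkeeping rather than analysis: one must check that the reindexing over odd layers in \cref{eqn:wd_obj,eqn:pn_obj} accounts for every weight exactly once (so AM--GM can be applied termwise), and in particular that the coefficient $\tfrac{c}{2}$ on $\|\bv_{i,L}\|_2^2$ in $\widetilde R$ is chosen precisely so that the two output-layer contributions cancel in the inequality $F_\lambda \geq \widetilde F_\lambda$. Once this correspondence is established, the argument is pure AM--GM together with the rescaling invariance of homogeneous units, and I anticipate no further obstacle.
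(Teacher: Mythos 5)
Your proposal is correct and rests on the same two ingredients as the paper's own proof---the AM--GM/balance fact that $\frac{1}{2}R(\bW)\geq \widetilde R(\bW)$ with equality exactly for balanced weights, and the invariance of $L$ and $\widetilde R$ under per-unit rescaling---so it is essentially the same argument, merely organized as a direct comparison of optimal values rather than the paper's two proofs by contradiction. The only caveat (shared with the paper's ``rescale if necessary'' step) is the degenerate case of a unit with, say, $\bv_{i,2j-1}=0$ but $\bw_{i,2j-1}\neq 0$, which no positive $\alpha$ can balance and which must instead be handled by zeroing out the dead unit, an operation that changes neither the network function nor $\widetilde R$.
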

\begin{proof}
Suppose that $\widehat \bW$ is a solution to $\min_{\bW} L(\bW)+\lambda \widetilde R(\bW)$, but there exists a $\bW$ such that $$L(\widehat \bW)+\frac{\lambda}{2}R(\widehat \bW)> L(\bW)+\frac{\lambda}{2}  R(\bW) \ . $$
\cref{thm:balance} shows that $R(\bW) = 2 \widetilde R(\bW)$, which contradicts the claim that $\widehat\bW$ is a solution to $\min_{\bW} L(\bW)+\lambda \widetilde R(\bW)$.
Next let $\bW$ be a solution to $\min_{\bW} L(\bW)+\frac{\lambda}{2} R(\bW)$, but suppose it does not minimize $L(\bW)+\lambda \widetilde R(\bW)$.  Then there exists a $\widehat \bW$ such that $$L(\bW)+\lambda \widetilde R(\bW)> L(\widehat\bW)+\lambda \widetilde R(\widehat\bW) \ . $$
If necessary, rescale the weights $\widehat \bW$ so that $\|\widehat\bw_{i,2j-1}\|_2  = \|\widehat\bv_{i,2j-1}\|_2$ for each term in $\widetilde R(\bW)$; this does not affect the value of $L(\widehat \bW)$.  Then $R(\widehat\bW)/2=\widetilde R(\widehat\bW)$, which shows that $L(\bW)+\frac{\lambda}{2}  R(\bW)> L(\widehat\bW)+\frac{\lambda}{2} R(\widehat\bW)$, contradicting the assumption that $\bW$ is a solution to the first optimization. 
\end{proof}
{We will refer to the product $\|\bw_{i,j}\|_2\, \|\bv_{i,j}\|_2$ as \abnorm of {the homogeneous unit}. And we will call the objective 
}
\begin{equation}\label{eq:path_norm_obj}
G_\lambda(\bW) \ := \ L(\bW) \ + \ \lambda \widetilde R(\bW)
\end{equation}
the \emph{path-norm objective}.

From \Cref{thm:path-norm} we have that minimizing $G_\lambda$ is equivalent to minimizing the weight decay objective $F_\lambda$.
Our new neural network training algorithm is designed to minimize this $G_\lambda$.  The key observation is that the product terms $\|\bw\|_2\, \|\bv\|_2$ are non-smooth, which means that minimizers may be sparse. 
In fact, as $\lambda$ increases fewer and fewer terms (units) will be nonzero.  This is remarkable, since the sparsity of solutions is not apparent from a cursory inspection of the original weight decay objective, as indicated in~\cref{fig:w2v2_vs_wd} (\emph{right}).
The product terms are reminiscent of \emph{group lasso} regularization terms. 
Proximal gradient descent methods have been widely applied to this (group) lasso type of regularization schemes in the linear/convex cases \cite{yuan2006model,hanson1988comparing,hastie2015statistical,friedman2010regularization}. 
{However, due to the scaling equivalence between $\bw$ and $\bv$, the product term $\|\bw\|_2\|\bv\|_2$ is not amenable to separation via the proximal operator. As a workaround, we propose the imposition of a unit norm constraint on $\bw$. The resulting constrained optimization problem is shown to be equivalent to the original problem in \cref{lemma:equiv_constraint_normalization}.}
{
\begin{lemma}\label{lemma:equiv_constraint_normalization}
The solution sets of
\begin{equation}\label{eq:path_norm_reg_prob}
\min_{\bW} G_{\lambda}(\bW) =  L(\bW)+\lambda \left( \sum_{j=1}^{\lfloor L/2\rfloor}\sum_i^{n_{2j-1}} \|\bw_{i,2j-1}\|_2\, \|\bv_{i,2j-1}\|_2 \ + \ \frac{c}{2}\sum_i^{n_L} \|\bv_{i,L}\|_2^2\right)    
\end{equation}
and
\begin{equation}\label{eq:path_norm_reg_prob2}
\min_{\bW} \widetilde{G_{\lambda}}(\bW) =  L(\bW)+\lambda \left( \sum_{j=1}^{\lfloor L/2\rfloor}\sum_i^{n_{2j-1}} I_{\mathbb{S}}(\bw_{i,2j-1}) \|\bv_{i,2j-1}\|_2 \ + \ \frac{c}{2}\sum_i^{n_{L}} \|\bv_{i,L}\|_2^2 \right)    
\end{equation}
are equivalent. Specifically, any solution to \cref{eq:path_norm_reg_prob2} is a solution to \cref{eq:path_norm_reg_prob}, and any solution to \cref{eq:path_norm_reg_prob} is a solution to \cref{eq:path_norm_reg_prob2} (after rescaling the weights such that $\|\bw_{i,2j-1}\|_2 = 1$ for {all neurons}).
Here $I_{\mathbb{S}}(\bw)$ is defined as:
\begin{footnotesize}$
\begin{cases}
1 & \text{if } \|\bw\|_2=1\\
\infty & \text{o.w.}
\end{cases}.
$\end{footnotesize}
\end{lemma}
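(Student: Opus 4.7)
The plan is to exploit the rescaling invariance of each homogeneous unit to move between the two formulations. The key observation, which I will establish first, is that for any $\alpha>0$, replacing the pair $(\bw_{i,2j-1},\bv_{i,2j-1})$ with $(\alpha\bw_{i,2j-1},\alpha^{-1}\bv_{i,2j-1})$ leaves $L(\bW)$ unchanged (by homogeneity of $\sigma$, the unit output $\bv\,\sigma(\bw^T\bx)$ is preserved) and also leaves the product $\|\bw_{i,2j-1}\|_2\,\|\bv_{i,2j-1}\|_2$ unchanged. Moreover, the last-layer term $\tfrac{c}{2}\sum_i\|\bv_{i,L}\|_2^2$ appears identically in both objectives. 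A simple but important calculation is that whenever $\|\bw_{i,2j-1}\|_2=1$, we have $I_{\mathbb{S}}(\bw_{i,2j-1})\|\bv_{i,2j-1}\|_2 = \|\bv_{i,2j-1}\|_2 = \|\bw_{i,2j-1}\|_2\|\bv_{i,2j-1}\|_2$, so $G_\lambda(\bW)$ and $\widetilde{G_\lambda}(\bW)$ agree on the feasible set of \cref{eq:path_norm_reg_prob2}.

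With these tools I would define a normalization map $T$ that, for each odd-layer unit, rescales the pair using $\alpha=1/\|\bw_{i,2j-1}\|_2$ when $\bw_{i,2j-1}\neq 0$. The invariance observation gives $G_\lambda(T(\bW))=G_\lambda(\bW)$ and $T(\bW)$ is feasible for \cref{eq:path_norm_reg_prob2}, so $\widetilde{G_\lambda}(T(\bW))=G_\lambda(\bW)$. For the forward direction, suppose $\bW^\star$ solves \cref{eq:path_norm_reg_prob2}. Feasibility forces $\|\bw_{i,2j-1}^\star\|_2=1$, hence $G_\lambda(\bW^\star)=\widetilde{G_\lambda}(\bW^\star)$. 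If some $\bW'$ gave $G_\lambda(\bW')<G_\lambda(\bW^\star)$, then $T(\bW')$ would be feasible and satisfy $\widetilde{G_\lambda}(T(\bW'))=G_\lambda(\bW')<\widetilde{G_\lambda}(\bW^\star)$, a contradiction. The reverse direction is symmetric: given a minimizer $\bW^\star$ of \cref{eq:path_norm_reg_prob}, the point $T(\bW^\star)$ attains the same value of $G_\lambda$ and is feasible, so it has the same $\widetilde{G_\lambda}$ value; any strictly better feasible competitor would likewise yield a strictly better competitor in $G_\lambda$ via the identity of the two objectives on the feasible set.

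The main obstacle is the degenerate case $\bw_{i,2j-1}=0$ on which the rescaling map $\alpha=1/\|\bw\|_2$ is not defined. I would handle this case separately: if $\bw_{i,2j-1}=0$, then the unit output $\bv\,\sigma(\bw^T\bx)=\bv\,\sigma(0)=0$, independent of $\bv$ (using $\sigma(0)=0$, which holds for ReLU and the other homogeneous activations named in the paper). Thus we may first replace $\bv_{i,2j-1}$ by $\bm 0$ without changing $L(\bW)$ and without increasing $\|\bw\|_2\|\bv\|_2$, and then replace $\bw_{i,2j-1}$ by an arbitrary unit vector; the unit output remains zero since $\bv=\bm 0$, so $L$ is still unchanged, and $\|\bv\|_2=0$ makes the $\widetilde R$ contribution vanish. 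This extended $T$ still satisfies $\widetilde{G_\lambda}(T(\bW)) \le G_\lambda(\bW)$, with equality whenever $\bW$ is a minimizer of $G_\lambda$ (else one could produce a strictly better $\widetilde{G_\lambda}$ value that translates back via the same invariance into a contradiction with the minimality in $G_\lambda$). With the degenerate case absorbed, the two minimizations inherit each other's optima up to the stated rescaling, which is exactly the assertion of the lemma.
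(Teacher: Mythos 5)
Your proof is correct and follows essentially the same route as the paper's: observe that $G_\lambda$ and $\widetilde{G_\lambda}$ coincide on the unit-norm feasible set, use the homogeneity-based rescaling $(\bw,\bv)\mapsto(\bw/\|\bw\|_2,\ \|\bw\|_2\,\bv)$ to map any point to a feasible one with the same objective value, and derive both directions by contradiction. Your explicit treatment of the degenerate case $\bw_{i,2j-1}=\bm{0}$ (zeroing $\bv$ and inserting an arbitrary unit vector, using $\sigma(0)=0$) is a small extra care the paper's rescaling step leaves implicit, but it does not change the substance of the argument.
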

\begin{proof}
    We will first start by proving any solution to \cref{eq:path_norm_reg_prob2} is a solution to \cref{eq:path_norm_reg_prob}.
     Suppose $\widetilde{\bW}$ is a solution to \cref{eq:path_norm_reg_prob2}. Then $\widetilde{G_{\lambda}}(\widetilde{\bW}) = {G_{\lambda}}(\widetilde{\bW})$. Now suppose there exists a $\bW$ such that ${G_{\lambda}}(\widetilde{\bW}) > {G_{\lambda}}({\bW})$. We can rescale $\bW$ to be $\bW'$ such that for each homogeneous units: ${\bw}'_{i,2j-1} = \frac{{\bw}_{i,2j-1}}{\lVert {\bw}_{i,2j-1} \rVert}$, and ${\bv}'_{i,2j-1} = \lVert {\bw}_{i,2j-1} \rVert \cdot {\bv}_{i,2j-1}$. Then ${G_{\lambda}}({\bW}) = {G_{\lambda}}({\bW}') = \widetilde{G_{\lambda}}({\bW}') < \widetilde{G_{\lambda}}(\widetilde{\bW})$. This contradict with the assumption that $\widetilde{\bW}$ is the optimal solution for \cref{eq:path_norm_reg_prob2}. Therefore any solution to \cref{eq:path_norm_reg_prob2} is a solution to \cref{eq:path_norm_reg_prob}.
    
    We now prove any solution to \cref{eq:path_norm_reg_prob} is a solution to \cref{eq:path_norm_reg_prob2} after rescaling the weights as above. Let $\widehat{\bW}$ be an optimal solution to \cref{eq:path_norm_reg_prob}, 
    and $\widehat{\bW}'$ be the rescaled version
    such that ${G_{\lambda}}(\widehat{\bW}) = {G_{\lambda}}(\widehat{\bW}') = \widetilde{G_{\lambda}}(\widehat{\bW}')$.
    Now suppose there exists $\bW$ where 
    $\widetilde{G_{\lambda}}(\bW) < \widetilde{G_{\lambda}}(\widehat{\bW}')$. By construction, $\widetilde{G_{\lambda}}(\bW) = {G_{\lambda}}(\bW)$, therefore we have that
    ${G}_{\lambda}(\mathbf{W}) < {G}_{\lambda}(\widehat{\mathbf{W}})$. This contradicts the hypothesis that $\widehat{\bW}$ is the optimal solution to \cref{eq:path_norm_reg_prob}. Thus any solution to \cref{eq:path_norm_reg_prob} is a solution to \cref{eq:path_norm_reg_prob2} (after rescaling). 
\end{proof}}

{
With the equivalence in the optimization stated above, we propose our proximal algorithm \pathprox\ for \abnorm in~\cref{alg:proximal}.
In the algorithm, we project each $\bw$ to the unit sphere, and apply the standard proximal gradient step for group lasso on $\bv$. Following the analysis in \cite{davis2020stochastic}, with mild assumptions on the \emph{boundedness of our neural network} and proper choice of step sizes, \pathprox\ provably converges:
\begin{theorem}[informal convergence analysis for \pathprox]
    Let $\left\{\bW_k\right\}_{k \geq 1}$ be the iterates produced by \pathprox. Then almost surely, every limit point $\bW^{\star}$ of the iterates $\left\{\bW_k\right\}_{k \geq 1}$ is a stationary point for the problem \cref{eq:path_norm_obj}, and a stationary point for the problem \cref{eqn:wd_obj} after rescaling the weights.
\end{theorem}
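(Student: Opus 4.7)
The plan is to recast \pathprox\ as an instance of the stochastic proximal subgradient scheme analyzed in~\cite{davis2020stochastic}, applied to the constrained path-norm problem~\cref{eq:path_norm_reg_prob2} rather than to~\cref{eqn:wd_obj} directly. First I would verify that the constrained problem has the structure required by that framework: the data-fidelity loss $L$ is locally Lipschitz and tame (ReLU networks composed with standard smooth losses are semi-algebraic and hence definable in an o-minimal structure), while the regularizer decomposes into block-separable pieces---an indicator of the unit sphere $I_{\mathbb{S}}(\bw_{i,2j-1})$ together with a group-lasso penalty $\lambda\|\bv_{i,2j-1}\|_2$ on each homogeneous unit, plus a squared-$\ell_2$ penalty on the final-layer weights. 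Each block admits a closed-form proximal operator (projection onto the sphere for $\bw$, group soft-thresholding for $\bv$), so the \pathprox\ update is exactly the proximal stochastic subgradient iteration for~\cref{eq:path_norm_reg_prob2}.

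Next I would check the hypotheses needed to invoke the convergence result of~\cite{davis2020stochastic}: (i) the iterates remain in a bounded set, which is the explicit \emph{boundedness of our neural network} assumption in the statement; (ii) the loss and the regularizer are Whitney stratifiable and definable, which holds for ReLU networks with standard losses; (iii) the stochastic subgradient estimates are conditionally unbiased with bounded second moment, the standard mini-batch assumption; and (iv) the step-size sequence satisfies $\sum_k \eta_k = \infty$ and $\sum_k \eta_k^2 < \infty$. Under these conditions their theorem guarantees that almost surely every limit point $\bW^{\star}$ of the iterates is a Clarke stationary point of~\cref{eq:path_norm_reg_prob2}.

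I would then transfer stationarity back to the weight-decay formulation in two steps. By \cref{lemma:equiv_constraint_normalization}, minimizers of~\cref{eq:path_norm_reg_prob2} and~\cref{eq:path_norm_reg_prob} coincide after the rescaling $\bw \mapsto \bw/\|\bw\|_2$, $\bv \mapsto \|\bw\|_2\,\bv$; I would upgrade this equivalence from global minimizers to stationary points by observing that on the open stratum $\{\bw \ne 0\}$ the rescaling is a smooth reparametrization under which the Clarke subdifferentials of the two composite objectives correspond via the chain rule. Composing this with the analogous passage given by \cref{thm:path-norm} yields that $\bW^{\star}$---after further rescaling so that $\|\bw_{i,2j-1}\|_2 = \|\bv_{i,2j-1}\|_2$ on every homogeneous unit---is a Clarke stationary point of the weight decay objective $F_\lambda$ in~\cref{eq:weight_decay_objective}.

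The main obstacle is precisely this last transfer. \cref{thm:path-norm} and \cref{lemma:equiv_constraint_normalization} are proved purely by value comparison and so apply directly only to global minima; lifting them to Clarke stationary points requires a careful subdifferential chain-rule argument and, more delicately, separate treatment of dead units with $\bw_{i,2j-1} = 0$, where the product penalty is non-smooth and the rescaling is undefined. Fortunately such units contribute zero to both objectives and can be excised from the balancing requirement, so the stationarity transfer should go through on the complementary active set. All other ingredients---verifying the definable/o-minimal structure, the proximal form of the \pathprox\ step, and the Robbins--Monro step-size conditions---are by now standard in the proximal stochastic optimization literature and would account for the bulk of the formal but routine bookkeeping.
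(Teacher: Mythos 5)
Your proposal follows essentially the same route as the paper: both cast \pathprox\ as the stochastic proximal subgradient method of \cite{davis2020stochastic} applied to the sphere-constrained path-norm problem, verify local Lipschitzness, boundedness, Robbins--Monro step sizes, and definability in an o-minimal structure, and conclude that every limit point is Clarke stationary for that constrained objective. Your final step---transferring stationarity back to the weight-decay objective via a chain-rule/reparametrization argument on the active units---is actually more careful than the paper, which states that claim only informally and proves stationarity solely for the constrained path-norm formulation; your flagged treatment of dead units with $\bw_{i,2j-1}=0$ is exactly the delicate point such a transfer would need.
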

}
{The details of the convergence analysis are presented in \cref{sec:convergence_analysis}.} The algorithm is presented as a full batch gradient {update}, but can be easily modified to SGD/mini-batch style algorithms. 
As indicated in \cref{thm:path-norm}, we may have weights associated to non-homogeneous units: $\{\bv_{i, L}\}_{i=1}^{n_L}$. Standard weight decay is applied to these weights.
\Cref{alg:proximal} can also be applied to convolutional neural networks, where we treat each channel of a given convolutional layer as a one-homogeneous unit. \cref{appendix:homo_units_everywhere} discusses this situation and others in detail.

{Besides \cref{thm:neuron_balance}, the homogeneity between layers implies for any $j$, $k$, the total \abnorm for $j$-th layer should equal to the total \abnorm for $k$-th layer at minimum weight decay representation.
\cref{alg:proximal} doesn't explicitly encourage this equalization of \abnorm across layers. To resolve this, 
we also apply a layer-wise balancing procedure (described in~\cref{appendix:layer_balance}) after each gradient step to ensure the total \abnorm each layer is equalized at every iteration.
} 

\begin{algorithm}\small
\begin{algorithmic}
\STATE \textbf{Input: } loss functions $L$, learning rate $\gamma > 0$, weight decay parameter $\lambda > 0$, total number of iterations $T$.

\FOR{$t = 1, 2, ..., T$}
    \FOR{$j = 1, 2, ..., \lfloor L/2\rfloor$}
        \STATE For each homogeneous unit $i$ in ($2j-1$, $2j$) coupled layer:
        \STATE\textbf{Update homogeneous input weights:}

        \STATE update based on batch gradient: $\by \leftarrow \bw^{t-1}_{i, 2j-1}  - \gamma \frac{\partial L(\bW)}{\partial \bw^{t-1}_{i, 2j-1}} \rvert_{\bW^{t-1}}$.

        \STATE project to have unit norm: $\bw^{t}_{i, 2j-1} \leftarrow \argmin_{\|\bw\|_2 = 1} \|\bw - \by\|_2 = \frac{\by}{\| \by \|_2}$.
        
        \STATE\textbf{Update homogeneous output weights:}
        
        \STATE update based on batch gradient: $\bz \leftarrow \bv^{t-1}_{i, 2j-1}  - \gamma \frac{\partial L(\bW)}{\partial \bv^{t-1}_{i, 2j-1}} \rvert_{\bW^{t-1}}$.
        
        \STATE apply proximal operator: $\bv^{t}_{i, 2j-1} \leftarrow \text{Prox}_2(\bz)$ with $\text{Prox}_2(\bz)_i = \begin{cases} 0 &||\bz||_{2} \leq \lambda \cdot \gamma \\ \bz_i - \lambda \cdot \gamma \frac{\bz_i}{||\bz||_{2}} & \text{ o.w. } \end{cases}$.
    \ENDFOR
    \STATE Apply layer-wise balance procedure to ensure total \abnorm for each layer is equalized: $\sum_{i}^{n_{2j-1}} \|\bw^{t}_{i, 2j-1}\|_2 \|\bv^{t}_{i, 2j-1}\|_2 = \sum_{i}^{n_{2k-1}} \|\bw^{t}_{i, 2k-1}\|_2 \|\bv^{t}_{i, 2k-1}\|_2,\ \forall j, k$.
\ENDFOR
\end{algorithmic}
\caption{{\pathprox:\ The Proximal Gradient Algorithm for \abnorm as discussed in \cref{thm:path-norm} and \cref{lemma:equiv_constraint_normalization}}}
\label{alg:proximal}
\end{algorithm}

\section{Experimental Results}\label{sec:exp}
In this section, we will present the experimental results to support the following claims: 
\pathprox\ 
provides
\begin{enumerate*}
\item faster convergence
\item better generalization, and
\item sparser solutions    
\end{enumerate*}
than weight decay.

\newcommand{\pruneTaskNumb}[0]{5} 
For simplicity, we use the notation \emph{MLP-$d$-$n$} to represent a fully-connected feedforward network with $d$ layers, and $n$ ReLU neurons in each.
The notation \emph{MLP-$d$-$n$ factorized} represents a modification to the \emph{MLP-$d$-$n$}, where each layer is replaced by two factorized linear layers, with $n$ hidden neurons in each.
We evaluate \pathprox\ on the following tasks:
\textbf{(Task 1)} MNIST~\cite{lecun1998gradient} subset on MLP-3-400 factorized,
\textbf{(Task 2)} MNIST on MLP-6-400,
\textbf{(Task 3)} CIFAR10~\cite{krizhevsky2009learning} on VGG19~\cite{Simonyan2015VeryDC},
\textbf{(Task 4)} SVHN on VGG19,
\textbf{(Task \pruneTaskNumb)} MNIST on MLP-3-800.
For MNIST subset, we randomly subsample 100 images per class. 
Details of the dataset and network will be introduced in the~\cref{appendix:exp_setup}.
The details of applying~\cref{alg:proximal} to convolutional neural networks such as VGG19 are discussed in~\cref{appendix:homo_units_everywhere}.

\subsection{\pathprox\ Minimizes Weight Decay Objective Faster}\label{sec:pathprox_vs_pathsgd}
\begin{figure}
  \centering
  \includegraphics[width=0.4\linewidth]{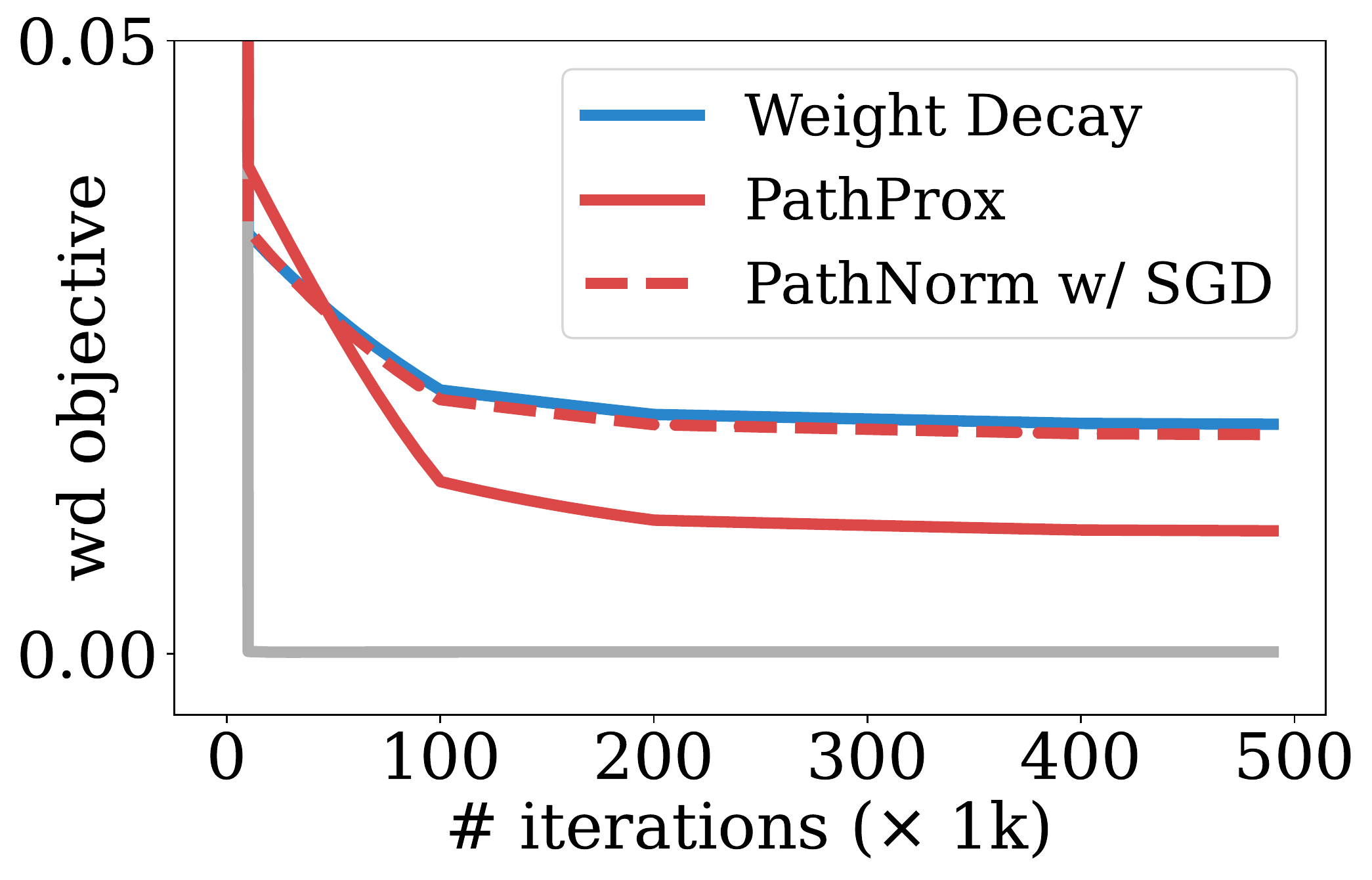}
  \caption{
  Minimizing the weight decay objective by applying \pathprox, SGD on path-norm objective, as well as SGD on weight decay objective. Applying SGD on the path-norm objective and weight decay objective yield a similar convergence rate, but \pathprox\ enjoys faster convergence. The detailed setup is presented in \cref{sec:pathprox_vs_pathsgd}.}
  \label{fig:obj_wd_w2v2_prox_and_sgd}
\end{figure}
{
\pathprox\ applies the proximal gradient algorithm to the path-norm objective and enjoys a faster convergence rate. In this section, we experimentally investigate the individual components of this algorithm to understand the source of its superior performance. Specifically, we verify that simply optimizing the path-norm objective instead of the weight decay objective with SGD yields no significant improvement on the rate of convergence, and thus the faster convergence is attributed to applying the \emph{proximal gradient method} to the path-norm objective.

To do so, we evaluate the weight decay objective~\eqref{eq:weight_decay_objective} on the model trained by 1) applying the proximal gradient algorithm to the path-norm objective~\eqref{eq:path_norm_obj}, which is our proposed \pathprox\ over 2) applying SGD to the path-norm objective, and 3) applying SGD to the weight-decay objective.
We look at the \textbf{(Task 1)}: MNIST subset on MLP-3-400 factorized, with $\lambda=10^{-4}$, and choose the best learning rate for each method (\cref{fig:obj_wd_w2v2_prox_and_sgd}).
Our experiments indicate that the weight decay objective is minimized at a similar rate for 2) and 3), which confirms our discussion in~\cref{thm:path-norm}, that regularizing with the path-norm objective and weight decay objective are equivalent optimization problems. However the proximal method enjoys faster convergence, therefore in the remaining experiments, when regularizing with the \abnorm, we will only consider using \pathprox.
}

\subsection{\pathprox\ Generalizes Better on Test and Corrupted Data}\label{exp:generalization_on_test}
Performance on the unseen dataset, or dataset sampled from other distributions, measures the generalization ability of the model. Following the experimental design in \cite{Zhang2017UnderstandingDL, harutyunyan2020improving, shen2019learning}, the following modification of the data and label are investigated:
\begin{enumerate}[label={\roman*)}]
    \item \textbf{True labels}: the original dataset without modification. \label{item_true_label}
    \item \textbf{Corrupted dataset}: for MNIST, we train on an unmodified dataset, then evaluate the average accuracy across different types of corruptions using the MNIST-C dataset~\cite{Mu2019MNISTCAR}. Similarly, for CIFAR10, we train on an unmodified dataset, then evaluate the average accuracy across different types of corruptions using the CIFAR10-C dataset~\cite{Hendrycks2019BenchmarkingNN}. \label{item_cor_data}
\end{enumerate}

In this section, we present the result of \textbf{(Task 1)}, \textbf{(Task 2)}, and \textbf{(Task 3)} on modification \ref{item_true_label}, \ref{item_cor_data}, and result of \textbf{(Task 4)} on modification \ref{item_true_label} in~\cref{tab:generalization_result_full}. 
For each experiment, we did a grid search on the hyper-parameter choice of $\lambda$ and \emph{learning rate}, and pick the best set of parameters based on the \emph{validation accuracy}. The details of the hyper-parameter search are in~\cref{appendix:exp_setup}. Notice that although we did not explicitly prune the model, our proposed \pathprox\ inherently encourages the sparse structure: in the~\cref{tab:generalization_result_full}, we calculate the structural sparsity of the model, namely the percentage of the active units in the grouped layers, and the results confirm our proximal method can naturally prune the model to have some level of sparsity.
\begin{table}[H]
    \centering
    \caption{Generalization results for different modifications on weight decay and \pathprox. Numbers are highlighted if the gap between weight decay and \pathprox\ is at least the sum of both standard errors.}
    \label{tab:generalization_result_full}
    \resizebox{0.9\columnwidth}{!}{%
    \begin{tabular}{clllll}
    \toprule
    \multirow{2}{*}{Task} & \multirow{2}{*}{Modification} & \multicolumn{2}{c}{Weight Decay} & \multicolumn{2}{c}{\pathprox} \\
    \cline{3-6}
    & & Accuracy & Sparsity & Accuracy & Sparsity \\
    \midrule
    \multirow{2}{*}{1} & True labels & $90.87 \pm 0.1$ & 100\% & $\boldsymbol{91.46 \pm 0.11}$ & $99.92 \pm 0.14$\%\\
    & Corrupted data & $64.86 \pm 0.37$ & 100\% & $65.52 \pm 0.65$ &$99.92 \pm 0.14$\%\\
    \midrule
    \multirow{2}{*}{2} & True labels & $98.29 \pm 0.03$ & 100\% & $98.21 \pm 0.07$ & $98.3 \pm 0.78$\%\\
    & Corrupted data & $71.54 \pm 0.22$ & 100\% & $\boldsymbol{72.65 \pm 0.44}$ & $99.95 \pm 0.04$\%\\
    \midrule
    \multirow{2}{*}{3} & True labels & $90.41 \pm 0.1$ & 100\% & $\boldsymbol{90.79 \pm 0.06}$ & $48.29 \pm 10.68$\%\\
    & Corrupted data & $60.78 \pm 0.17$ & 100\% & $60.71 \pm 0.26$ & $60.34 \pm 7.69$ \%\\
    \midrule
    \multirow{1}{*}{4} & True labels & $94.68 \pm 0.12$ & 100\% & $\boldsymbol{95.51 \pm 0.12}$ & $44.02 \pm 14.03$\%\\
    \bottomrule
    \end{tabular}
    }
\end{table}

\subsection{\pathprox\ Finds Sparse Solutions Faster}\label{appendix:structural_pruning}

As the proximal algorithm operates on each one-homogeneous unit (instead of the weight parameter), in this section we focus on the structural sparsity of the model.

\begin{figure}[ht]
    \centering
    \includegraphics[width=0.55\linewidth]{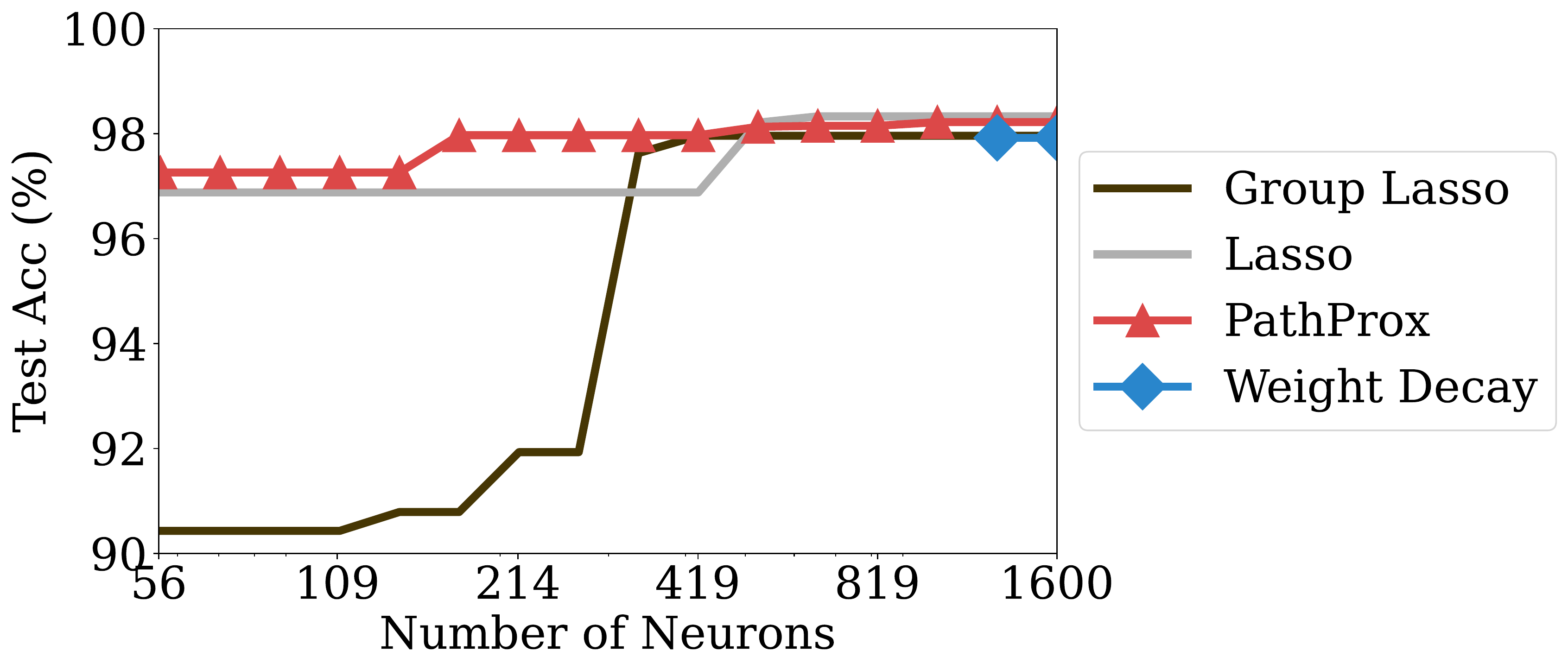}
    \caption{Sparse solution found by 1) weight decay, 2) \pathprox, 3) lasso, and 4) group lasso. {The x-axis is shown in logarithmic scale for clarity.} We highlight the sparse solution of weight decay and \pathprox, which solve the equivalent objective function. However, weight decay could not find the solution as sparse as \pathprox.}
    \label{fig:sparsity}
\end{figure}
{While a sparse solution exists for the weight decay objective, training with stochastic gradient descent (SGD) does not consistently lead to such a solution. In contrast, our proposed \pathprox\ method encourages sparsity in the weight decay objective through the inclusion of a thresholding step in the proximal update.
In this section, we compare the sparse solution found by \pathprox, with other prevalent sparsity-inducing regularization: lasso~\cite{Tibshirani1996RegressionSA} and group lasso~\cite{yuan2006model}, on \textbf{(Task \pruneTaskNumb)}.}
{For the latter two regularization term, we simply apply SGD on the corresponding objective}. 
For each experiment, we first prune the model in each training iteration (prune one unit if either its input vector $\bw$ or output vector $\bv$ is zeroed out). After training and pruning for $30000$ iterations, we take the model checkpoint from iterations $\{ 5000, 10000, \cdots, 30000\}$, set the unit to be inactive if $\| \bw \|_2 \|\bv\|_2 < 10^{-5}$, and then train this sparse model for another $10000$ iterations. Since different $\lambda$ and \emph{learning rate} may lead to different levels of sparsity, we try with \emph{learning rate} in $\{ 0.01, 0.03 \}$, and $\lambda \in \{0.0001, 0.001, 0.003, 0.01\}$ for all method.

Performance of the sparse solution is presented in~\cref{fig:sparsity}. For each sparsity level $s$, we present the best solution with sparsity $<s$. Weight decay only finds solutions of sparsity $>64\%$. 
{On the other hand, \pathprox, which minimizes the weight decay equivalent objective, finds sparse solutions that are better than the solutions found by a na\"ive application of SGD on lasso or group lasso.}

{
\subsection{Effects of Weight Normalization} 
In effect, constraining $\bw$ to have norm $1$ is a reparameterization~\cite{Salimans2016WeightNA, Laarhoven2017L2RV} of the optimization problem. 
Here we investigate if the performance gains of \pathprox \ are solely attributable to this reparameterization.
With \textbf{(Task 3)}: CIFAR 10 on VGG19 and modification \ref{item_true_label}: true labels, we evaluate the performance when training with weight decay, and keeping the $\| \bw \|_2$ to be $1$ by projecting the $\bw$ back to the unit sphere after each gradient step. 
In the result~\cref{tab:ablation_study_weight_normalization}, we found a minor digression in accuracy when combining weight normalization with weight decay. Thus, we conclude that the gains in test accuracy observed with \pathprox \ are not from weight normalization and reparameterization.

\begin{table}[H]
    \centering
    \caption{Effects of Weight Normalization (WN) on \textbf{(Task 3)} and Modification \ref{item_true_label}.}
    \label{tab:ablation_study_weight_normalization}
    \resizebox{1.\columnwidth}{!}{%
    \begin{tabular}{c|l|cc|cc|cc}
    \toprule
    \multirow{2}{*}{Task} & \multirow{2}{*}{Modification} & \multicolumn{2}{c|}{Weight Decay} & \multicolumn{2}{c|}{Weight Decay (w/ WN)} & \multicolumn{2}{c}{\pathprox(w/ WN by default)}\\ 
    \cline{3-8}
    & & Accuracy & Sparsity & Accuracy & Sparsity & Accuracy & Sparsity \\
    \midrule
    3 & True labels & $90.41 \pm 0.1$ & 100\% & $90.31 \pm 0.05$ & 100\%  & $\boldsymbol{90.79 \pm 0.06}$ & $48.29 \pm 10.68$\%\\
    \bottomrule
    \end{tabular}
    }
\end{table}
}

\section{Conclusion and Future Work} \label{sec:future_work}
This work shows that our proposed \pathprox\ offers advantages in neural network training compared to standard weight decay.  There are several directions for possible future work. One avenue would be to investigate alternative formulations of the proximal gradient method that treat all homogeneous units in the same manner (rather than grouping weights into disjoint sets). Another is developing an adaptive learning rate procedure like those used in other proximal gradient methods.

\section*{Acknowledgments}
The authors would like to thank Stephen Wright, Dmitriy Drusvyatskiy and Ahmet Alacaoglu for helpful discussions regarding the convergence of our algorithm.

\appendix

\section{Homogeneous Units are Everywhere}\label{appendix:homo_units_everywhere}
In the paper, we mainly discuss the theorem and algorithm related to the multi-layer perceptron, but the identification of homogeneous units is not limited to multi-layer perceptron. 
In this section, we formally identify the homogeneous units in common neural network architectures: multi-layer perceptron (MLP) and convolutional neural network (CNN), and include the extension of \cref{thm:balance} to the convolutional neural network.

\subsection{Homogeneous Units in Multi-layer Perceptron (MLP)}
A multi-layer perceptron $f_{\text{MLP}}(\bx; W): \mc{X} \rightarrow \mc{Y}$ with $L$ linear layers takes the following recursive parameterization
\begin{align*}
    & f_{\text{MLP}}(x; W) = W^{L} \begin{bmatrix} 1 \\ h^L \end{bmatrix} \\
    & h^{k + 1} = \sigma\left(W^k \begin{bmatrix} 1 \\ h^k \end{bmatrix} \right), \,\, \forall k \in [L-1] \quad \text{ and } \quad h^1 = x. 
\end{align*}
Here, the linear layers are parameterized by weights $W = \{W^k \in \R^{n_{k+1} \times n_{k}}\}_{k=1}^L$, where $n_k$ is the dimension of the $(k-1)$-th hidden layer, $n_1$ is the input dimension, and $n_{L+1}$ is the output dimension. The ReLU activation function $\sigma(x) = \max\{0, x\}$ is applied element-wise.

Let $\widetilde{W}$ denote the matrix of $W$ with its first column removed, $W_i$ and $W_{:,i}$ denote the $i$-th row and column of $W$, respectively. Consider for every two layers, we have 
\begin{align*}
    \widetilde{W}^{k+1}\sigma\left(W^k \begin{bmatrix} 1 \\ h^k \end{bmatrix}\right) = \sum_{i=1}^{n_k} \widetilde{W}_{:,i}^{k+1} \sigma\left(W_i^k \begin{bmatrix} 1 \\ h^k \end{bmatrix}\right)
\end{align*}
as part of the computation for $h^{k+2}$. Therefore, for every two consecutive layers with weights $W^{2j-1}$ and $\widetilde{W}^{2j}$ where $j \in \left[\lfloor \frac{L}{2} \rfloor \right]$, we identify $n_{2j - 1}$ homogeneous units — one for each hidden neuron. In our experiments, for regular multi-layer perceptron, we use exactly this coupling scheme, where every two layers are combined and viewed as $n_{2j - 1}$ homogeneous units. 

\subsection{Homogeneous Units in Factorized MLP}
Multi-layer perceptron can be equivalent factorized as shown in~\cite{Wang2021PufferfishCM}.
The linear layers are parameterized by weights $W = \{W^k \in \R^{n_{k+1} \times n_{k}}\}_{k=1}^L$, where $n_k$ is the dimension of the $(k-1)$-th hidden layer. For each $W^k$, we can further factorize it into $W^k = Q^k P^k$ for $k \in \{2, 3, \cdots, L-1\}$, where $P^k \in \R^{(n_{k+1} - 1) \times n_{k}}$, $Q^k \in \R^{n_{k+1} \times (n_{k+1} - 1)}$, and
$$
Q^1 = W^1\quad P^L = W^L
$$
Here the bias term is not required for the factorization. Now let $g^k = P^k \begin{bmatrix} 1 \\ h^k \end{bmatrix} \in \R^{n_{k+1}}$, and let $\widetilde{P}$ denote the matrix of $P$ with its first column removed, then $\widetilde{W} = Q \widetilde{P}$ has the first column removed. We have part of the computation for $h^{k+2}$ to be:

\begin{align*}
    Q^{k+1} \widetilde{P}^{k+1} \sigma\left(Q^k P^k \begin{bmatrix} 1 \\ h^k \end{bmatrix}\right) = Q^{k+1} g^{k+1}
\end{align*}
and
\begin{align*}
    g^{k+1} = \widetilde{P}^{k+1} \sigma\left(Q^k g^k\right) = \sum_{i=1}^{n_{k+1} - 1} \widetilde{P}_{:, i}^{k+1} \sigma(Q_i^k g^k)
\end{align*}
where $Q_i$ and $P_{:,i}$ denote the $i$-th row and column of $Q$ and $P$, respectively.
Therefore, for every consecutive layers with factorized weights $Q^k$ and $P^{k+1}$, we identify $n_{k+1}-1$ homogeneous units — one for each hidden neuron. 

Note that for fixed dimensions and number of layers, the class of factorized MLP is equivalent with the class of the original MLPs with only activated layers. Furthermore, with a factorized MLP, we bypass the issue of having to group even number of layers. Instead, we are able to optimize an equivalent class of function and identify a homogeneous unit for each hidden neuron.

\subsection{Homogeneous Units in Convolutional neural networks (CNN)}
Let $\oast$ denote the sliding-window convolutional operator, where for any matrices $\bX, \bY$ and $\bZ$, we have $\bZ_{i,j} = (\bX \oast \bY)_{i,j} = \sum_{a\in [L_1], b\in[L_2]} \bX_{a, b} \bY_{i+a, j+b}$ with $L_1$ and $L_2$ denoting the width and height of $\bX$. A convolutional homogeneous unit then takes the following form with one hidden channel
\begin{align*}
    \mu(\bx) = \left[\bv_j \oast \sigma\left(\sum_{i\in [C_1]} \bw_i \oast \bx_i\right)\right]_{j\in[C_2]}
\end{align*}
where $\bx$, $\bw$ and $\bv$ are three-dimensional tensors and with slight abuse of notation $\sigma(\cdot)$ is applied element-wise. $\bx$ takes the \emph{channel first} notation, where the first dimension indexes number of channels while the last two dimensions indexes width and height. Both $\bx$ and $\bw$ have channel size $C_1$ while $\bv$ has channel size $C_2$.

\subsubsection{Extension of~\cref{thm:neuron_balance} to Convolutional Layers}\label{appendix:extension_to_cnn} 
Below, we restate the equivalence from~\cref{thm:neuron_balance} for convolutional units:

\begin{theorem} (Convolutional Neural Network Balance Theorem)
Let $f$ be a function represented by a neural network and consider a representation of $f$ with the minimum sum of squared weights. Furthermore, let $\vectorize\cdot$ denote the vectorize operator of a tensor. Then the weights satisfy the following \emph{balancing constraints}. Let $\bw$ and $\bv$ denote the input and output weights of any convolutional homogeneous unit $\mu(\bx) = [\bv_j \oast \sigma(\sum_{i\in [C_1]} \bw_i \oast \bx_i)]_{j\in[C_2]}$ in this representation. Then $\|\vectorize{\bw}\|_2 = \|\vectorize{\bv}\|_2.$
\end{theorem}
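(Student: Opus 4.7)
The plan is to mirror the three-step structure of the proof of \cref{thm:neuron_balance}, with the only new ingredient being the verification that the convolutional operator $\oast$ respects scalar multiplication in the way required for rescaling invariance. Specifically, I would show (i) the convolutional unit $\mu$ is invariant under the rescaling $(\bw,\bv) \mapsto (\alpha\bw, \alpha^{-1}\bv)$ for any $\alpha>0$; (ii) minimize the contribution of this unit to the sum of squared weights over $\alpha$; and (iii) conclude by the same contradiction argument as before.

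For step (i), I would exploit the bilinearity of the sliding-window convolution. From the definition $\bZ_{i,j} = \sum_{a,b} \bX_{a,b}\bY_{i+a,j+b}$, it follows directly that $(\alpha\bw_i)\oast\bx_i = \alpha(\bw_i\oast\bx_i)$ and $\bv_j\oast(\alpha\bz) = \alpha(\bv_j\oast\bz)$ for any scalar $\alpha>0$. Combining with the positive homogeneity of $\sigma$, the rescaled unit satisfies
\begin{equation*}
\bigl[(\alpha^{-1}\bv_j)\oast\sigma\bigl(\textstyle\sum_i(\alpha\bw_i)\oast\bx_i\bigr)\bigr]_{j\in[C_2]}
= \bigl[\alpha^{-1}\bv_j\oast\alpha\,\sigma\bigl(\textstyle\sum_i\bw_i\oast\bx_i\bigr)\bigr]_{j\in[C_2]} = \mu(\bx),
\end{equation*}
so the function computed by the network is unchanged.

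For step (ii), since $\vectorize{\cdot}$ preserves the Frobenius norm, the contribution of this unit to $R(\bW)$ after rescaling is
\begin{equation*}
g(\alpha) := \alpha^2\|\vectorize{\bw}\|_2^2 + \alpha^{-2}\|\vectorize{\bv}\|_2^2,
\end{equation*}
which is minimized at $\alpha^\star = \sqrt{\|\vectorize{\bv}\|_2/\|\vectorize{\bw}\|_2}$, at which point $\|\vectorize{\alpha^\star\bw}\|_2 = \|\vectorize{(\alpha^\star)^{-1}\bv}\|_2$ by direct substitution (equivalently, by the AM--GM inequality, which is strict whenever the two norms disagree).

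For step (iii), suppose toward a contradiction that a representation of $f$ minimizes the total sum of squared weights but fails the balancing constraint at some convolutional unit. Rescaling that unit by $\alpha^\star$ preserves $f$ (by (i)) while strictly decreasing its contribution to $R(\bW)$ (by (ii)), contradicting minimality. Hence $\|\vectorize{\bw}\|_2 = \|\vectorize{\bv}\|_2$ at every convolutional homogeneous unit.

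The only real obstacle is bookkeeping in step (i): one must be careful that the scalar pulled out of each per-channel convolution $\bw_i\oast\bx_i$ survives the sum over input channels and can then be passed through $\sigma$ before being absorbed into the outer convolution with each $\bv_j$. Once this verification is in place, steps (ii) and (iii) are essentially identical to the fully connected case, so the proof is a clean extension rather than a substantially new argument.
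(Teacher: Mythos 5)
Your proposal is correct and follows exactly the argument the paper intends: the paper gives no separate proof for the convolutional case, presenting it as the same rescaling-plus-AM--GM contradiction used for \cref{thm:neuron_balance}, with the only new content being that scaling passes through the channel-wise convolutions and the homogeneous activation --- which is precisely the bilinearity check you supply in step (i). So your write-up matches the paper's approach, just with that verification made explicit.
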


We also note that one can take a channel-wise homogeneous pooling operation on the hidden channel, i.e., a homogeneous unit of the form $\mu(\bx) = [\bv_j \oast \mc{P}(\sigma(\sum_{i\in [C_1]} \bw_i \oast \bx_i))]_{j\in[C_2]}$, where $\mc{P}$ is some homogeneous pooling function such as max pooling or average pooling. Due to the homogeneity of the pooling layer, the above results also hold for these homogeneous units with pooling layers. 

\subsubsection{Identification of Homogeneous Units in CNN}
A 2D convolutional \emph{backbone network} is a sequence of function composition of convolutional and pooling layers that maps 3D tensors to 3D tensors. The output of a backbone network is usually then flattened and passed through an MLP. We focus on an $L$ layers convolutional backbone network here, which takes the following recursive parameterization
\begin{align*}
    & f_{\text{CNN}}(x; W) = \left[\left(\sum_{j} W_{i,j}^{L} \oast h_j^L \right) \oplus b_i^L\right]_{i\in[n_{L+1}]} \\
    & h^{k + 1} = \left[\mc{P}^k \left(\sigma\left(\left( \sum_j W_{i,j}^k \oast h_j^k\right) \oplus b_i^k\right)\right) \right]_{i\in[n_{k+1}]}, \,\,\forall k \in [L-1] \quad \text{ and } \quad h^1 = x. 
\end{align*}
Here $n_k$ denotes the number of hidden/output channels and $W = \{(W^k, B^k)\}_{k=1}^{L}$ are the weights parameterizing the neural network. Each layer weight $W^k$ is a four-dimensional tensor of dimensions $n_{k+1} \times n_k \times l \times l'$ and $b^k \in \R^{n_{L+1}}$. $\oplus$ is an element-wise addition operator which adds the later scalar argument to the former tensor. $\mc{P}^k$ is a channel-wise pooling layer such as average pooling and max pooling, or the identity function. Each hidden layer $h^k$ is then a three-dimensional tensor of dimensions $n_k \times l \times l'$. By substituting in one more recursive step and for any homogeneous activation function $\sigma$, we get
\begin{align*}
    h^{k + 1} &= \left[\mc{P}^k \left(\sigma\left(\left( \sum_j W_{i,j}^k \oast \mc{P}^{k-1} \left(\sigma\left(\left( \sum_{j'} W_{j,j'}^{k-1} \oast h_{j'}^{k-1} \right) \oplus b_j^k\right)\right) \right) \oplus b_i^k\right)\right) \right]_{i\in[n_{k+1}]} \\
    &= \mc{P}^k \left(\sigma\left(\left[\left( \sum_j W_{i,j}^k \oast \mc{P}^{k-1} \left(\sigma\left(\left( \sum_{j'} W_{j,j'}^{k-1} \oast h_{j'}^{j-1} \right) \oplus b_j^k\right)\right) \right) \oplus b_i^k\right]_{i\in[n_{k+1}]} \right)\right) .
\end{align*}
Therefore, for each $j \in [n_k]$, we have $\left[ W_{i,j}^k \oast \mc{P}^{k-1} \left(\sigma\left(\left( \sum_{j'} W_{j,j'}^{k-1} \oast h_{j'}^{k-1} \right) \oplus b_j^{k-1}\right)\right)\right]_{i\in[n_{k+1}]}$ as part of the computation. With the one-homogeneity of the channel-wise pooling layer $\mc{P}^{k-1}$, we can get an equivalent function by scaling $\left(\alpha W_{j,j'}^{k - 1}, \alpha b_j^{k - 1}\right)$ and $\frac{1}{\alpha} W_{i,j}^k$, we can therefore obtain a homogeneous unit for every hidden channel $j$ of the $k$-th layer $h^k$.

Since the channel-wise pooling layer can be either identity, or max pooling, or average pooling layer, we can group the convolutional layers across max/average pooling layers. Therefore, for every two consecutive convolutional layer (possibly across the pooling layer), with weights $\left(W_{j,j'}^{2k - 1}, b_j^{2k - 1}\right)$ and $W_{i,j}^{2k}$, where $k \in \{1, 2, \cdots, \floor{{L \over 2}}\}$, we identify $n_{2k - 1}$ homogeneous units, one for each channel.

\section{Layer-wise Balancing Procedure}\label{appendix:layer_balance}

In this section, we discuss the layer-wise balancing procedure that enforces balancing constraints across layers. This is motivated by the fact that the sums of \abnorm of every two consecutive layers should be equal at the minimum norm solution. Below, we will first formally show this observation as a corollary of~\cref{thm:neuron_balance} and then present the empirical impact of the layer-wise balancing algorithm.

\subsection{Corollary of~\cref{thm:neuron_balance}: Layer-wise Balancing}
As indicated in~\cref{thm:neuron_balance}, for the minimum sum of squared weight representation, the $\ell_2$ norm of the input vector and output vector of the unit are the same. Thus for the coupling of $(j-1, j)$-th layer with $n_{j-1}$ homogeneous units, we have
$$
\sum_{i=1}^{n_{j-1}} \left\|\bw_{i, j-1} \right\|_2^2 = \sum_{i=1}^{n_{j-1}} \left\|\bv_{i, j-1} \right\|_2^2
$$
and for the coupling of $(j, j+1)$-th layer, we have
$$
\sum_{i=1}^{n_{j}} \left\|\bw_{i, j} \right\|_2^2 = \sum_{i=1}^{n_{j}} \left\|\bv_{i, j+1} \right\|_2^2
$$
which indicate the $j-1$, $j$ and $j+1$-th layer have the same amount of sum of squared weights. Since $j$ is arbitrary, it is easily verified that at the minimum norm representation, each layer share the same amount of sum of squared weights. 
Now consider the $(j, j+1)$-th and $(k, k+1)$-th coupling layer, we have
$$
\frac{1}{2}\sum_{i=1}^{n_j} \left\|\bw_{i, j} \right\|_2^2 + \left\|\bv_{i, j+1} \right\|_2^2 = \frac{1}{2}\sum_{i=1}^{n_k} \left\|\bw_{i, k} \right\|_2^2 + \left\|\bv_{i, k+1} \right\|_2^2
$$
Again, as indicated in~\cref{thm:neuron_balance}, at minimum norm representation, we have $\| \bw \|_2 = \| \bv \|_2$, therefore
$$
\sum_{i=1}^{n_j} \left\|\bw_{i, j} \right\|_2  \left\|\bv_{i, j+1} \right\|_2 = \sum_{i=1}^{n_k} \left\|\bw_{i, k} \right\|_2 \left\|\bv_{i, k+1} \right\|_2
$$
So the sum of the \abnorm per coupling of layers are the same for the minimum norm solution. Our proposed proximal algorithm doesn't naturally enforce this, so we will apply the layer-wise balance procedure along the proximal algorithm.

{
\subsection{Effects of Layer-wise Balancing}\label{sec:ablation_study_layerwise_balance}
In this section, we study the impact of layer-wise balance: on \textbf{(Task 3)} and modification \ref{item_true_label}, we evaluate the performance of \pathprox\ without layer-wise balance, and demonstrate the result in~\cref{tab:ablation_study_layerwise_balance}.
This result suggests Layer-wise Balance is beneficial, but not the essential factor to make our proximal algorithm outperform plain weight decay.
\begin{table}[H]
    \centering
    \caption{Effects of Layer-wise Balance (LB) on \textbf{(Task 3)} and Modification \ref{item_true_label}.}
    \label{tab:ablation_study_layerwise_balance}
    \resizebox{1.\columnwidth}{!}{%
    \begin{tabular}{c|l|cc|cc|cc}
    \toprule
    \multirow{2}{*}{Task} & \multirow{2}{*}{Modification} & \multicolumn{2}{c|}{Weight Decay} & \multicolumn{2}{c|}{\pathprox\ (w/ LB by default)} & \multicolumn{2}{c}{\pathprox\  (w/o LB)} \\
    \cline{3-8}
    & & Accuracy & Sparsity & Accuracy & Sparsity & Accuracy & Sparsity \\
    \midrule
    3 & True labels & $90.41 \pm 0.1$ & 100\% & $\boldsymbol{90.79 \pm 0.06}$ & $48.29 \pm 10.68$\% & $90.58 \pm 0.04$ & $40.47 \pm 3.74$\%\\
    \bottomrule
    \end{tabular}
    }
\end{table}
}

{
\section{Convergence Analysis of \cref{alg:proximal}}\label{sec:convergence_analysis}
We utilize the analysis on stochastic proximal subgradient methods from \cite{davis2020stochastic} to prove that any limit point of our algorithm~\cref{alg:proximal} is a first-order stationary limit point. We will omit the details of their result, and focus on how it applies to our setting: 
Without loss of generality, we will just consider the number of layers to be even in the deep neural network. 
With $I_{\mathbb{S}}(\bw)$ defined as:
\begin{footnotesize}$
I_{\mathbb{S}}(\bw) = \begin{cases}
1 & \text{if } \|\bw\|_2=1\\
\infty & \text{o.w.}
\end{cases}
$\end{footnotesize}, recall that we seek to solve
\begin{equation}
\begin{aligned}
\min_{\bW} G_{\lambda}(\bW) &= L(\bW)+\lambda R(\bW) \\
&= L(\bW)+\lambda \left( \sum_{j=1}^{\lfloor L/2\rfloor}\sum_{i=1}^{n_{2j-1}} I_{\mathbb{S}}(\bw_{i, 2j-1}) \|\bv_{i,2j-1}\|_2 \right)
\end{aligned}
\end{equation}
which can be equivalently expressed as the following constrained optimization
\begin{equation}\label{eq:opt_sub_gradient_ours}
\begin{aligned}
    \min _{\bW \in \mathcal{W}} &G_{\lambda} (\bW)= L(\bW)+ \lambda R(\bW) \\
    &\text{where } R(\bW) = \sum_{i=1}^{n} \| \bv_i \|_2
\end{aligned}
\end{equation}
where we simplify the notation by using $i$ to index the homogeneneous units in the network (omitting the notation to indicate the layer of each neuron), and $n = n_1 + n_2 + \cdots + n_L$. $\bW$ is the set of input and output weights $\{\bw_i, \bv_i\}_{i=1}^n$ associated with neuron $i$, and $\mathcal{W}$ is the weight space $\mathbb{S}^{d-1} \times \mathcal{V}^{d}$: $\bw_i \in \mathbb{S}^{d-1}$, and $\bv_i \in \mathcal{V}^{d}$, where $\mathcal{V}^{d}$ is the closed bounded set in $\mathbb{R}^d$ such that $\forall \bv \in \mathcal{V}^{d}$, we have $\| \bv \|_2 \leq C < \infty$. Here we make the assumption that the weights are bounded. We also assume the datasets we are working on are bounded as well.
$L(\bW)$ is a composition of the softmax cross-entropy loss function with the neural network function. 
The main theorem in~\cite{davis2020stochastic} requires some assumptions (which we will verify next) and states that:

\begin{theorem}[Corollary~6.4 in~\cite{davis2020stochastic}]\label{thm:davis_thm}
Assume that our problem set-up satisfies Assumption E in~\cite{davis2020stochastic}  and that $L, R$, and $\mathcal{W}$ are definable in an o-minimal structure. Let $\left\{W_k\right\}_{k \geq 1}$ be the iterates produced by the proximal stochastic subgradient method. Then almost surely, every limit point $W^{\star}$ of the iterates $\left\{W_k\right\}_{k \geq 1}$ converges to a first-order stationary point of \cref{eq:opt_sub_gradient_ours}, i.e. satisfies
$$
0 \in \partial L(W^{\star}) + \lambda \partial R(W^{\star}) + N_{\mathcal{W}}(W^{\star}) 
$$
\end{theorem}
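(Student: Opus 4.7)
The plan is to apply Corollary 6.4 of \cite{davis2020stochastic} essentially verbatim, after casting \cref{alg:proximal} in the form of their stochastic proximal subgradient scheme and checking each of their hypotheses. First, I would rewrite \pathprox\ so the three ingredients needed by their framework are visible: the smooth/subdifferentiable loss is $L(\bW)$, the convex separable regularizer is $\lambda R(\bW) = \lambda \sum_i \|\bv_i\|_2$, and the closed constraint set is $\mathcal{W} = \prod_i \mathbb{S}^{d-1} \times \prod_i \mathcal{V}^d$. The projection $\bw \mapsto \by / \|\by\|_2$ and the group-soft-threshold $\text{Prox}_2$ in \cref{alg:proximal} together implement exactly the proximal step of $\lambda R + I_{\mathcal{W}}$, so the iterates $\{\bW_k\}$ are an instance of the method they analyze.

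Next, I would verify Assumption E of \cite{davis2020stochastic}. Boundedness of the iterates is built into the algorithm: every step projects $\bw_i$ onto the unit sphere and keeps $\bv_i$ inside the bounded set $\mathcal{V}^d$, so we do not need an a priori boundedness argument. The regularizer $R$ is a finite sum of Euclidean norms, hence convex with an easily computable prox, as required. On the compact set $\mathcal{W}$ with bounded inputs, the network output is bounded, so the composition $L$ of a smooth softmax cross-entropy with a ReLU network is locally Lipschitz with bounded subgradients and is weakly convex (the latter being the key regularity condition they impose on the nonconvex part). The stochastic gradient of $L$ is an unbiased estimator with bounded variance under the standard mini-batch model, and I would adopt the diminishing, non-summable, square-summable step-size schedule from their hypothesis.

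The remaining ingredient is definability in an o-minimal structure. The set $\mathcal{W}$ is semi-algebraic because it is a product of a sphere and a Euclidean ball; the regularizer $R$ is a sum of Euclidean norms, hence semi-algebraic; and the network function is a composition of affine maps and ReLU, which is also semi-algebraic. The softmax cross-entropy involves $\exp$ and $\log$, which are not semi-algebraic but are definable in the o-minimal structure $\mathbb{R}_{\mathrm{an},\exp}$. Since compositions and finite sums of functions definable in a common o-minimal structure remain definable, $L$, $R$, and $\mathcal{W}$ are all definable in $\mathbb{R}_{\mathrm{an},\exp}$, which validates the projection-formula / Sard-type results that \cite{davis2020stochastic} use to rule out spurious limit points.

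With all hypotheses in place, I would directly invoke Corollary 6.4 of \cite{davis2020stochastic} to conclude that almost surely every limit point $\bW^\star$ of $\{\bW_k\}$ satisfies $0 \in \partial L(\bW^\star) + \lambda \partial R(\bW^\star) + N_{\mathcal{W}}(\bW^\star)$, which is first-order stationarity for \cref{eq:opt_sub_gradient_ours}. Translating back via \cref{lemma:equiv_constraint_normalization} and a rescaling of each homogeneous unit so that $\|\bw_i\|_2 = \|\bv_i\|_2$ then yields stationarity for the original weight-decay objective \cref{eqn:wd_obj}. The main obstacle I anticipate is the verification of weak convexity of $L$ on $\mathcal{W}$ and the careful bookkeeping that the single o-minimal structure $\mathbb{R}_{\mathrm{an},\exp}$ contains every operation appearing in the network and loss; once these structural facts are nailed down, the convergence conclusion is immediate from the cited theorem.
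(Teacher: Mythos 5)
Your plan is essentially the paper's: the statement is imported verbatim from \cite{davis2020stochastic} (their Corollary~6.4), and the paper's work, like yours, consists entirely of casting \cref{alg:proximal} as their proximal stochastic subgradient method and then verifying Assumption~E (closedness of $\mathcal{W}$, local Lipschitzness of $L$ and $R$, step-size and moment conditions, bounded weights) together with definability of $L$, $R$, and $\mathcal{W}$ in an o-minimal structure before invoking the result. Two caveats on your verification: weak convexity of $L$ is neither a hypothesis of Corollary~6.4 (the tame/definable framework of \cite{davis2020stochastic} exists precisely to dispense with it) nor something you could actually establish for a ReLU network composed with softmax cross-entropy, so that anticipated ``main obstacle'' should simply be dropped; and boundedness of the output weights $\bv_i$ is not enforced by the soft-thresholding step of \cref{alg:proximal} --- the paper imposes it as an explicit modeling assumption on the iterates rather than deriving it from the algorithm, so you should state it as such.
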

Here $N_{\mathcal{W}}(W^{\star})$ is the Clarke normal cone to $\mathcal{W}$. Specifically, for the $i$th neuron we have, 
\begin{align*}
    N_{\mathbb{S}^{d-1}}(\bw_i^{\star}) = \{c\bw_i^{\star}, c \in \mathbb{R}\}, \quad N_{\mathbb{R}^{d}}(\bv_i^{\star}) = \{\mathbf{0}\}.
\end{align*}

All that remains is to verify that our problem set-up satisfies
Assumption E in~\cite{davis2020stochastic} and that functions $L, R$ and weight space $\mathcal{W}$ are definable on an o-minimal structure.

\begin{corollary}
    The proximal stochastic subgradient method~\cref{alg:proximal} for \cref{eq:opt_sub_gradient_ours} satisfies the  Assumption E in~\cite{davis2020stochastic}.
\end{corollary}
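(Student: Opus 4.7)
The plan is to verify each clause of Assumption~E of~\cite{davis2020stochastic} directly, using the compactness of $\mathcal{W}$ to reduce essentially every bound to a uniform estimate on a compact set. Roughly, Assumption~E asks for (E1) local Lipschitzness of $L$, (E2) properness and closedness of the regularizer $R$ together with a computable proximal map, (E3) closedness of the constraint set $\mathcal{W}$, (E4) an unbiased stochastic subgradient oracle whose second moment is uniformly bounded on compact subsets of $\mathcal{W}$, and (E5) a Robbins--Monro step-size schedule $\sum_k \gamma_k = \infty$ and $\sum_k \gamma_k^2 < \infty$. I would check each in turn.

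First I would dispatch the deterministic clauses. For (E1), softmax--cross-entropy is smooth on $\mathbb{R}^{n_{L+1}}$ and a ReLU network with weights in the compact set $\mathcal{W}$ and inputs drawn from a bounded support is piecewise linear with uniformly bounded local Lipschitz constants, so the composition $L$ is Lipschitz on $\mathcal{W}$. For (E2), $R(\bW) = \sum_i \|\bv_i\|_2$ is a finite sum of Euclidean norms --- convex, continuous, proper, closed --- with the familiar group-soft-thresholding proximal operator already exposed in~\cref{alg:proximal}. For (E3), the set $\mathcal{W} = (\mathbb{S}^{d-1})^n \times (\mathcal{V}^d)^n$ is a product of unit spheres and closed Euclidean balls, hence closed (indeed compact), which additionally supplies the bounded-iterates property used in Theorem~\ref{thm:davis_thm}.

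For the stochastic clause (E4), I would argue that because $\mathcal{W}$ is compact and the inputs lie in a bounded support, every intermediate activation and every backpropagated quantity stays in a fixed compact set; the minibatch subgradient $G(\bW_k;\xi_k)$ is therefore almost surely bounded by a constant depending only on $\mathcal{W}$ and the data support, which yields the uniform second-moment bound. Unbiasedness with respect to $\partial L$ follows from the standard measurable-selection argument for i.i.d.\ minibatches. Clause (E5) is a hypothesis on the choice of learning rate and is simply inherited. The step that deserves the most care, and the place I would concentrate the effort, is establishing that the subgradient selector produced by automatic differentiation at the nonsmooth points of the ReLU network is a genuine element of the Clarke subdifferential $\partial L$; here I would invoke the chain-rule results for tame/definable functions from~\cite{davis2020stochastic} rather than attempt a direct verification, and then assemble (E1)--(E5) to conclude.
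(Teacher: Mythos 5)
Your proposal is correct and follows essentially the same route as the paper: verify each clause of Assumption E directly, using the boundedness of the weights and data to get closedness of $\mathcal{W}$, local Lipschitzness of $L$ and $R$ (the paper makes the Lipschitz bound on $R$ explicit via an $\ell_1$--$\ell_2$ comparison, which also settles the secant-slope clause), the Robbins--Monro step-size condition, and the moment bounds on the stochastic subgradients. Your final concern about automatic differentiation producing a valid Clarke subgradient is handled in the paper not in this corollary but in the companion definability corollary, exactly by the tame/definable chain-rule machinery you point to.
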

\begin{proof}

Property 1 of Assumption E requires $\mathcal{W}$ to be closed. This holds because both $\mathbb{S}^{d-1}$ and $\mathcal{V}^d$ are complete under the usual Euclidean norm. Property 1 also requires $L$ and $R$ to be locally Lipschitz. By our boundedness assumption on the weights of the network as well as the boundedness on the data, the neural network function is locally Lipschitz. Combining this with the fact that softmax and cross-entropy loss are both locally Lipschitz we can conclude that $L$ is locally Lipschitz since it is simply a composition of these three functions. We also have that $R$ is locally Lipschitz,
\begin{equation}\label{eq:R_is_lipschitz}
\begin{aligned}
    \frac{R(x) - R(z)}{\|x - z\|_2} &\leq \frac{R(x - z)}{\|x - z\|_2} = \frac{\sum_{i \in \mathcal{G}} \|x_i - z_i\|_2}{\|x - z\|_2} \\
    &\leq \frac{\| x - z \|_1}{ \| x - z \|_2} \leq \frac{\sqrt{d}\|x - z\|_2}{\|x - z\|_2} = \sqrt{d}
\end{aligned}
\end{equation}
where $\|\cdot\|_2$ denote the $\ell_2$ norm, and $\|\cdot\|_1$ denote the $\ell_1$ norm, and $\mathcal{G}$ is the group.
Therefore, Property 1 is satisfied. 

Property 2 requires the slope of the secant line of $R$ to be bounded by a bounded function. This is immediately satisfied from \eqref{eq:R_is_lipschitz}.
Property 3 requires the stepsize to be nonnegative, squared-summable but not summable. This can be achieved by a proper choice of stepsizes.
Property 4 demands the weights are bounded, which is trivially satisfied by our boundedness assumption. 
Property 5 and 6 require the first and second moments of the subgradient norm to be bounded by a bounded function. This also immediately follows from the boundedness assumption of the weights and the data.
\end{proof}

We note that the assumption we impose on the boundedness of the weights is reasonable with proper initialization and optimization. In all of our experiments, following the standard deep neural network weight initialization~\cite{he2015delving} and proper step-sizes, we never observe the weights growing unbounded.

Now we show that $L$ and $R$ are definable on an o-minimal structure, and the weight space $\mathcal{W}$ are definable. Since the composite of two definable functions is definable~\cite[Theorem 2.3]{loi2010lecture}, it suffices to show the following:
\begin{corollary}
    The following components are definable: 1) ReLU function $\max(0, x)$; 2) softmax function $s(\boldsymbol{x}) = \frac{\exp(\boldsymbol{x})}{\mathbf{1}^T \exp(\boldsymbol{x})}$; 3) cross-entropy loss $\ell(\boldsymbol{y}, \boldsymbol{z}) = -\log(\boldsymbol{z}_i)$; 4) group lasso norm $R(\{\bv_i\}) = \sum_{i} \| \bv_i \|_2$; 5) The weight space for $\bw_i$: $\mathbb{S}^{d-1}$; 6) The weight space for $\bv_i$: $\mathcal{V}^d$.
\end{corollary}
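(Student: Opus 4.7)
The plan is to fix the o-minimal structure $\mathbb{R}_{\exp}$, i.e.\ the real field $(\mathbb{R},+,\cdot,<)$ expanded by the exponential function, which is o-minimal by Wilkie's theorem. In any o-minimal expansion of the real field, every semialgebraic set and function is definable (via Tarski--Seidenberg), and in $\mathbb{R}_{\exp}$ the function $\exp$ is definable by construction; consequently $\log = \exp^{-1}$ on $(0,\infty)$ is also definable, since the graph of the inverse of a definable bijection is definable. With this structure fixed, I would check each of the six items in turn and invoke closure of definability under composition, finite sums and products, finite Boolean operations, and projections.

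For items (1), (4), (5), and (6), I would exhibit each object as semialgebraic. ReLU has the piecewise linear graph $\{(x,y):x\ge 0,\,y=x\}\cup\{(x,y):x\le 0,\,y=0\}$; the group-lasso regularizer $R(\{\bv_i\})=\sum_i\|\bv_i\|_2$ is a finite sum of functions of the form $\sqrt{\sum_j v_{ij}^2}$, where the square root has semialgebraic graph $\{(t,s):s\ge 0,\,s^2=t\}$; the sphere $\mathbb{S}^{d-1}=\{\bw:\sum_j w_j^2=1\}$ and the bounded ball $\mathcal{V}^d=\{\bv:\sum_j v_j^2\le C^2\}$ are each cut out by a single polynomial (in)equality. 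Each is therefore definable in $\mathbb{R}_{\exp}$.

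For items (2) and (3), I would write the softmax as the composition of the componentwise map $x\mapsto\exp(x)$ with the rational map $(y_1,\dots,y_k)\mapsto (y_i/\sum_j y_j)_i$; the latter is definable because division, restricted to $\{(a,b):b\ne 0\}$, has the definable graph $\{(a,b,c):bc=a\}$, and the denominator $\sum_j\exp(x_j)$ is strictly positive so the composition is well-defined. The cross-entropy $\ell(\boldsymbol{y},\boldsymbol{z})=-\log(\boldsymbol{z}_i)$ is the composition of a coordinate projection (semialgebraic) with $-\log$, which is definable in $\mathbb{R}_{\exp}$. The closure of definable functions under composition [Theorem 2.3]{loi2010lecture} then assembles these into the corollary.

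No substantive obstacle arises; the only point that needs slight care is the handling of partial domains. Both division and $\log$ are partial operations, so I would verify that each composition is only evaluated on its domain: the softmax denominator is a strictly positive sum of exponentials, and $\log$ is only applied to softmax outputs, which lie in $(0,1)$. Once these domain checks are in place, the six items follow by a direct application of Wilkie's theorem together with Tarski--Seidenberg and the composition closure property.
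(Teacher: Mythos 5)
Your proposal is correct and follows essentially the same route as the paper: invoke Wilkie's theorem to obtain an o-minimal structure containing $\exp$ (hence $\log$) and all semialgebraic functions, exhibit the algebraic pieces (ReLU, $\ell_2$ norm / group lasso, sphere, bounded ball, the normalization map inside softmax) via polynomial (in)equalities on their graphs, and assemble everything by closure of definability under composition. The only differences are cosmetic: you prove the semialgebraicity of ReLU, $\log$, and $\mathcal{V}^d$ directly rather than citing the prior work as the paper does, and you are slightly more careful about partial domains and the $s\geq 0$ constraint in the square-root graph, which if anything tightens a small sloppiness in the paper's own argument.
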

\begin{proof}
    First, \cite{davis2020stochastic} shows $\max(0, t)$, $\exp(t)$ and $\log(t)$ are definable, and the bounded norm space $\mathcal{V}^{d}$ is definable. So 1), 3) and 6) are immediately satisfied. 
    As pointed out in~\cite{davis2020stochastic} there exists an o-minimal structure containing all semialgebraic functions by the result in~\cite{wilkie1996model}, therefore to show that a function $v$ is definable on an o-minimal structure it suffices to show that the function is semialgebraic which means that the \emph{graph} of the function is a semialgebraic set. For a function $v(x)$, its graph is the set 
    $$
    \hat{G}(v) = \{ (x,v(x)) : x \in X\}
    $$
    this set is semialgebraic if it can be expressed as,
    $$
    \{ u \in \hat{G}(v): p_i(u) \leq 0, \quad \text{for }i=1, \cdots, \ell \}
    $$
    where $p_i$ are polynomials. In the following, we will prove each of the functions is definable by constructing the semialgebraic set of the graph of the functions.
    \noindent
    For 2), we can re-write this as a composite of function: $s(\boldsymbol{x}) = \frac{\exp(\bx)}{\mathbf{1}^T \exp(\bx)} = f \circ \exp(\bx)$, where $f(\bz) = \frac{\bz}{\mathbf{1}^T\bz}$. For $f$, 
    $$
    \begin{aligned}
        \hat{G}(f) &= \left\{\left(\bz, \frac{\bz}{\mathbf{1}^T\bz}\right) \in \R^{d+} \times \R^{d+}, \forall \bz \in \R^{d+} \right\} \\
        &= \left\{(\bz, \by) \text{ s.t. } \by_i = \frac{\bz_i}{\sum_j \bz_j} \forall i \in [d] \right\} \\
        &= \left\{(\bz, \by) \text{ s.t. } \bz_i - \by_i \left(\sum_j \bz_j\right) = 0, \forall i \in [d] \right\} \\
    \end{aligned}
    $$
    Thus, $f$ is definable, and by~\cite{wilkie1996model} $\exp(\bx)$ is also definable, therefore $s(\cdot)$ is definable as it is the composition of two definable functions~\cite[Theorem 2.3]{loi2010lecture}.
    For 4), let $R = S \circ \|\cdot \|_2$, where $S$ is the summation operator. We can show $\| \cdot \|_2$ is definable by:
    $$
    \begin{aligned}
        \hat{G}(\| \cdot \|_2) &= \{(\bx, \|\bx\|_2) \in \R^d \times \R, \forall \bx \in \R^d\} \\
        &= \{(\bx, y) \text{ s.t. } y^2 = \|\bx\|_2^2\} \\
        &= \left\{(\bx, y) \text{ s.t. } y^2 - \sum_{i=1}^d \bx_i^2 = 0 \right\} \\
    \end{aligned}
    $$
    and $S(\cdot)$ is definable by
    $$
    \begin{aligned}
        \hat{G}(S) &= \{(\bx, \mathbf{1}^T\bx) \in \R^K \times \R, \forall \bx \in \R^K \} \\
        &= \left\{(\bx, y) \text{ s.t. } y - \sum_{i=1}^K \bx_i = 0 \right\} \\
    \end{aligned}
    $$
    So $R(\cdot)$ is definable.
    For 5), it reduce to showing $\hat{G}(\| \cdot \|_2)$ is semialgebraic set, which we have already proved.
\end{proof}

} 

\section{Details of Experiments}
\subsection{Experiment Setup} \label{appendix:exp_setup}
In this section we will describe the dataset and network we use in our experiments. All our experiments are conducted on Nvidia 3090 GPUs.

\subsubsection{Datasets}\label{appendix:datasets}
In this work, we demonstrate the performance of \pathprox\ on the following dataset:
\begin{enumerate}[label={\alph*)}]
    \item \textbf{MNIST}~\cite{lecun1998gradient} consist of 10 classes of hand-written digits, each class has 6000 training images and 1000 test images. When training, we randomly split the dataset into 55000 training data, and 5000 validation data. We use the validation data to decide the hyper-parameters. Each image has shape $28 \times 28$. When training, we normalize the input data.
    \item \textbf{MNIST-C}~\cite{Mu2019MNISTCAR} is a robustness benchmark on MNIST dataset, which applies 15 standard corruption to the MNIST dataset, namely
    \begin{enumerate*}
    \item shot noise,
    \item impluse noise,
    \item glass blur,
    \item motion blur,
    \item shear,
    \item scale,
    \item rotate,
    \item brightness,
    \item translate,
    \item stripe,
    \item fog,
    \item spatter,
    \item dotted line,
    \item zigzag, and
    \item canny edge.
    \end{enumerate*}
    When validate our result on MNIST-C dataset, we train on clean MNIST dataset, and pick the best model based on clean validation set, then test its performance on the corrupted MNIST-C dataset to measure the generalization.
    \item \textbf{CIFAR10}~\cite{krizhevsky2009learning} has 10 classes of real images. Each class has 5000 training images and 1000 test images. When training, we randomly split 45000 images for training, and 5000 images for validation. Each image has shape $32 \times 32$. When training, we random crop, random horizontal flip, and normalize the input data.
    \item \textbf{CIFAR10-C}~\cite{Hendrycks2019BenchmarkingNN} is a robustness benchmark on CIFAR10 dataset, which applies 18 standard corruption to the CIFAR10 dataset, namely
    \begin{enumerate*}
    \item Gaussian noise,
    \item shot noise,
    \item impluse noise,
    \item defocus blur,
    \item frosted glass blur,
    \item motion blur,
    \item zoom blur,
    \item snow,
    \item frost,
    \item fog,
    \item brightness,
    \item contrast,
    \item elastic,
    \item pixelate, and
    \item JPEG.
    \end{enumerate*} When validate our result on CIFAR10-C dataset, we train on clean CIFAR10 dataset, and pick the best model based on clean validation set, then test its performnce on the corrupted CIFAR10-C dataset.
    \item \textbf{SVHN}~\cite{Netzer2011ReadingDI} is the Street View House Numbers dataset, with 73257 digits for training, 26032 digits for testing. We randomly split 67257 images for training, and 6000 images for validation. Each image has shape $32 \times 32$. When training, we normalize the input data. We didn't use the additional dataset to boost the performance.
\end{enumerate}

\subsubsection{Models}
In this work, we demonstrate the performance of \pathprox\ on the following models:
\begin{enumerate}[label={\alph*)}]
    \item \textbf{MLP-$d$-$n$} model consist of $d$ fully connected layers, each with $n$ neurons. Details of the \emph{MLP-$d$-$n$} architecture is shown in~\cref{arch:mlp-typed}.
    \item \textbf{MLP-$d$-$n$ factorized} model consist of $d$ fully connected layers, each with $n$ neurons. For each layer, we factorize it into two linear layers, with hidden neurons to be $n$ as well. Details of the \emph{MLP-$d$-$n$ factorized} architecture is shown in~\cref{arch:mlp-typed}.
    \item \textbf{VGG19} is introduced in~\cite{Simonyan2015VeryDC}, which is widely used for computer vision task. Instead of 3 fully-connected layer as the classifier, to apply VGG19 on CIFAR10, we use 1 fully-connected layer instead\footnote{Code adapted from \url{https://github.com/kuangliu/pytorch-cifar/blob/master/models/vgg.py}}. There are 16 convolutional layers. After every two or four convolutional layers, there follows a max-pooling layer to reduce the feature map size by half.
\end{enumerate}

In~\cref{exp:generalization_on_test}, we demonstrate the generalization result of weight decay and \pathprox\ on:
\begin{itemize}[{}]
    \item \textbf{(Task 1)} MNIST subset on MLP-3-400 factorized
    \item \textbf{(Task 2)} MNIST on MLP-6-400
    \item \textbf{(Task 3)} CIFAR10 on VGG19
    \item \textbf{(Task 4)} SVHN on VGG19
\end{itemize}

In~\cref{appendix:structural_pruning}, we evaluate the ability to obtain sparse solution of weight decay, lasso and group lasso on
\begin{itemize}[{}]
    \item \textbf{(Task \pruneTaskNumb)} MNIST on MLP-3-800
\end{itemize}
When evaluating the \pathprox, we need to couple the layers into groups. We choose to factorize it, and thus evaluate the algorithm on:
\begin{itemize}[{}]
    \item \textbf{(Task \pruneTaskNumb)} MNIST on MLP-3-800 factorized
\end{itemize}

\begin{table}[h!]  
\footnotesize
\caption{The MLP architecture used in the experiments. For MLP-3-400, MLP-6-400, and MLP-3-800 factorized, we group each coupling layer together. For MLP-3-800, we refer to each layer as a group.}
\label{arch:mlp-typed}
\centering
\begin{tabular}{ccccc} 
\toprule
\textbf{Parameter} & \textbf{MLP-3-400 factorized} & \textbf{MLP-6-400} & \textbf{MLP-3-800} & \textbf{MLP-3-800 factorized} \\
\midrule
\multirow{4}{*}{Group 1} & 784$\times$400 & 784$\times$400 & 784$\times$800 & 784$\times$800\\
& ReLU & ReLU & ReLU & ReLU \\
& 400$\times$400 & 400$\times$400 & & 800$\times$800\\
&  & ReLU &  & \\
\midrule
\multirow{4}{*}{Group 2} & 400$\times$400 & 400$\times$400 & 800$\times$800 & 800$\times$800\\
& ReLU & ReLU & ReLU & ReLU \\
& 400$\times$400 & 400$\times$400 & & 800$\times$800\\
&  & ReLU & & \\
\midrule
\multirow{4}{*}{Group 3} & 400$\times$400 & 400$\times$400 & 800$\times$800 & 800$\times$800\\
& ReLU & ReLU & ReLU & ReLU \\
& 400$\times$10 & 400$\times$10 & & 800$\times$10 \\
\midrule
Group 4 & & & 800$\times$10 & \\
\bottomrule
\end{tabular}
\end{table}

\subsubsection{Hyper Parameter Choice}

For generalization experiments in~\cref{exp:generalization_on_test}, we did grid search for the learning rate as well as the $\lambda$ as follows:
\begin{enumerate}[{}]
    \item \textbf{(Task 1) \& (Task 2)}:
    \begin{itemize}
        \item \emph{learning rate}: 0.003, 0.01, 0.03, 0.1, 0.3, 0.5
        \item $\lambda$: 0.00001, 0.00003, 0.0001, 0.0003, 0.001, 0.003, 0.01
    \end{itemize}
    \item \textbf{(Task 3)}:
    \begin{itemize}
        \item \emph{learning rate}: 0.01, 0.03, 0.1, 0.3
        \item $\lambda$: 0.00001, 0.00003, 0.0001, 0.0003, 0.001, 0.003, 0.01
    \end{itemize}
    \item \textbf{(Task 4)}:
    \begin{itemize}
        \item \emph{learning rate}: 0.03, 0.1, 0.3, 0.5
        \item $\lambda$: 0.0001, 0.0003, 0.001, 0.003
    \end{itemize}
    \item \textbf{(Task \pruneTaskNumb)}:
    \begin{itemize}
        \item \emph{learning rate}: 0.01, 0.03
        \item $\lambda$: 0.0001, 0.001, 0.003, 0.01
    \end{itemize}
\end{enumerate}

{After searching in the hyper-parameter space, we pick the hyper-parameter with largest validation accuracy. The choice of the hyper-parameter for each task is presented in~\cref{tab:hyper_parameter_choice}.}
\begin{table}[h!]
    \centering
    \caption{Hyper-parameter choice for weight decay and \pathprox\ for the experiments shown in \cref{tab:generalization_result_full}}
    \label{tab:hyper_parameter_choice}
    \resizebox{0.7\columnwidth}{!}{%
    \begin{tabular}{clll}
    \toprule
    Task & Modification & Weight Decay & \pathprox\\ 
    \midrule
    \multirow{2}{*}{1} & True labels & $\lambda=0.001$, lr$=0.3$ & $\lambda=0.0001$, lr$=0.3$ \\
    & Corrupted data & $\lambda=0.001$, lr$=0.3$ & $\lambda=0.0001$, lr$=0.3$\\
    \midrule
    \multirow{2}{*}{2} & True labels & $\lambda=0.00003$, lr$=0.1$ & $\lambda=0.0003$, lr$=0.1$ \\
    & Corrupted data & $\lambda=0.00003$, lr$=0.1$ & $\lambda=0.0003$, lr$=0.1$ \\
    \midrule
    \multirow{2}{*}{3} & True labels & $\lambda=0.001$, lr$=0.1$ & $\lambda=0.001$, lr$=0.1$\\
    & Corrupted data & $\lambda=0.001$, lr$=0.1$ & $\lambda=0.001$, lr$=0.1$\\
    \midrule
    \multirow{1}{*}{4} & True labels & $\lambda=0.001$, lr$=0.3$ & $\lambda=0.003$, lr$=0.1$\\
    \bottomrule
    \end{tabular}
    }
\end{table}

\subsubsection{Standard Error Calculation}
For each task, we run the grid search experiments on one random seed, and pick the set of hyper parameter (\emph{learning rate}$^*$, $\lambda^*$) based on the best validation accuracy. Then for each task, we run with (\emph{learning rate}$^*$, $\lambda^*$) for another three times. Again in each run, the test accuracy is picked based on the best validation accuracy. With four runs, we evaluate the mean and standard error of the experiments, and present the result in the table.

\subsection{Experiments Details for~\cref{fig:w2v2_vs_wd,fig:w2v2_vs_wd_decision_boundary}}\label{appendix:exp_details_for_fig}

\paragraph{\cref{fig:w2v2_vs_wd}} We assess the performance of \textbf{(Task 1)} using $\lambda=0.0001$ and determine the optimal learning rate, which is $0.3$, for both \pathprox\ and weight decay. To ensure successful training, we implement a learning rate decay schedule. Looking at the convergence plot on the left, we observe that while both \pathprox\ and weight decay achieve the same data fidelity loss (indicated by the gray curve) on the training set, \pathprox\ effectively minimizes the weight decay objective at a faster rate. For the histogram in the \emph{middle}, we measure the local Lipschitz constant for unseen data by calculating the spectral norm of the Jacobian (of the model output with respect to the input) on 1000 MNIST test samples. Notably, the model trained with weight decay exhibits a generally larger spectral norm of the Jacobian compared to our approach, indicating that \pathprox\ leads to models with a lower local Lipschitz constant, enhancing their robustness. Finally, the sparsity plot on the right demonstrates that \pathprox\ showcases empirical evidence of finding a solution with a sparse structure, which is an additional advantage stemming from the thresholding operation in the proximal gradient update.

\paragraph{\cref{fig:w2v2_vs_wd_decision_boundary}} The decision boundary depicted in the figure is generated by training a shallow network with a single hidden layer on the given data points. The raw output (or logits) of this network is a two-dimensional vector, denoted as $[v_0\ v_1]$, where $v_i$ belongs to the set of real numbers ($\R$). Subsequently, these $v_i$ values are passed through the softmax function, defined as $p_i = s(v_i) = \frac{\exp(v_i)}{\exp(v_0) + \exp(v_1)}$. By applying the softmax function, the resulting $p_i$ values satisfy the property that $p_0 + p_1 = 1$. 
We draw the \emph{white} line on samples that has $p_0 = p_1 = 0.5$. Let class 0 represent the blue class, and class 1 be the red class, then we draw the \emph{blue} line on samples that has $p_0 = 0.9$, and \emph{red} line on samples with $p_0 = 0.1$. The background color of the figure represents the values of $p_0$ and $p_1$, where red indicates a high value of $p_1$ close to 1, and blue indicates a high value of $p_0$ close to 1.
A smaller region between the blue and red lines indicates that the classifier is more effective in handling dataset outliers. In the figure, when trained with the same number of iterations, \pathprox\ is better at handling the outliers compared to the weight decay with gradient descent.
In detail, the weight decay parameter in this experiment is $\lambda=0.0001$ and \emph{learning rate} (SGD step size) is 0.1. Both algorithms are initialized identically and run for the same number of iterations.

\bibliographystyle{siamplain}
\bibliography{references}

\end{document}